\documentclass[10pt]{article} 
\usepackage[preprint]{tmlr}
\usepackage{mathtools}
\usepackage{booktabs}       

\usepackage{amsmath,amsfonts,bm}

\def\tblref#1{table~\ref{#1}}

\def\figref#1{figure~\ref{#1}}

\def\secref#1{section~\ref{#1}}

\def\eqref#1{equation~\ref{#1}}

\def\1{\bm{1}}

\def\rvz{{\mathbf{z}}}

\def\vzero{{\bm{0}}}

\def\vlambda{{\bm{\lambda}}}
\def\vepsilon{{\bm{\varepsilon}}}
\def\va{{\bm{a}}}

\def\vc{{\bm{c}}}
\def\vd{{\bm{d}}}
\def\ve{{\bm{e}}}

\def\vg{{\bm{g}}}
\def\vh{{\bm{h}}}

\def\vq{{\bm{q}}}

\def\vs{{\bm{s}}}

\def\vu{{\bm{u}}}
\def\vv{{\bm{v}}}

\def\vx{{\bm{x}}}
\def\vy{{\bm{y}}}

\def\evepsilon{{\varepsilon}}

\def\evg{{g}}
\def\evh{{h}}

\def\evv{{v}}

\def\evx{{x}}

\def\mA{{\bm{A}}}
\def\mB{{\bm{B}}}
\def\mC{{\bm{C}}}
\def\mD{{\bm{D}}}
\def\mE{{\bm{E}}}

\def\mG{{\bm{G}}}

\def\mI{{\bm{I}}}

\def\mL{{\bm{L}}}

\def\mP{{\bm{P}}}
\def\mQ{{\bm{Q}}}

\def\mU{{\bm{U}}}
\def\mV{{\bm{V}}}

\def\mZ{{\bm{Z}}}

\DeclareMathAlphabet{\mathsfit}{\encodingdefault}{\sfdefault}{m}{sl}
\SetMathAlphabet{\mathsfit}{bold}{\encodingdefault}{\sfdefault}{bx}{n}

\def\sD{{\mathbb{D}}}

\def\sK{{\mathbb{K}}}

\def\sN{{\mathbb{N}}}

\def\sR{{\mathbb{R}}}
\def\sS{{\mathbb{S}}}

\newcommand{\E}{\mathbb{E}}

\usepackage{amsthm}
\usepackage{hyperref}
\usepackage{url}
\usepackage{graphicx}
\usepackage{subcaption}

\newtheorem{claim}{Claim}
\newtheorem{lemma}{Lemma}
\newtheorem{corollary}{Corollary}
\newtheorem{theorem}{Theorem}
\newtheorem{definition}{Definition}

\title{The spectral neuron}

\author{\name Alex Shoff \email alexander.shtoff@tii.ae \\
      \addr Technology Innovation Institute}

\def\month{MM}  
\def\year{YYYY} 
\def\openreview{\url{https://openreview.net/forum?id=XXXX}} 
\def\symd{\sS^d}
\DeclareMathOperator{\sym}{sym}

\begin{document}

\begin{flushright}
\begin{minipage}{0.55\textwidth}
\itshape
``There is a possibility that the human neurons do more compute than we think.''

\raggedleft
--- Ilya Sutskever, 2025
\end{minipage}
\end{flushright}

\maketitle

\begin{abstract}
As machine learned models increase in complexity and expressive power, features of simpler models, such as intrinsic coefficient transparency and control over the shape of the modeled function are lost. On the one edge of the spectrum we have simple linear models that possess coefficient transparency, but have a limited expressive power. On the other edge we have neural networks, that have expressive power that improves with scaling, but are mostly opaque.  In this work we develop the \emph{spectral neuron} concept: a scalar model given by $f(\vx)=\lambda_k\!\left(\mA_0+\sum_{i=1}^n x_i \mA_i\right)$, with learned real symmetric matrices \(\mA_0,\ldots,\mA_n\). The input enters the model through an affine matrix function, but the prediction is obtained by reading one of its eigenvalues. Thus, the model is nonlinear, but the source of nonlinearity is still mathematically explicit. This gives us a useful middle ground: the model can become more expressive as the matrix dimension grows, while retaining coefficient transparency through the learned matrices. For example, extremal eigenvalues yield convex or concave functions, semidefinite constraints on the coefficient matrices impose monotonicity, and the associated eigenspaces characterize local feature influence. We study coefficient transparency, feature-influence bounds, and shape-control properties of this model family, and then test whether it can be learned and scaled in practice. We develop a systematic study of this model family, bringing together spectral results from several mathematical literatures to characterize its expressivity, coefficient transparency, feature influence, and shape-control properties. Code available at \url{https://github.com/alexshtf/spectral_neuron_paper}.
\end{abstract}

\section{Introduction}

Many practical domains where tabular data is the primary modality benefit from models that are performant and possess coefficient transparency. We may need to explain to buyers of insurance policies why we assess the risk as we do, to explain to business stakeholders or regulators how our models make predictions and which features have the greatest influence on the predictions, perform feature selection, or control and understand how the model's predictions change when the input features change. Generalized linear models, despite their limited expressive power, are a convenient default - the coefficients tell almost the entire story.

As machine learned models increase in complexity, we lose some of these properties. For example, for a tree ensemble we can no longer quantify the effect of changing a feature on the outcome. The modeled function is discontinuous, and there may be jumps of arbitrary sizes. While this may be desirable in some cases to correctly model reality, this is not always the case. In the extreme case we have neural networks, that even with one hidden layer are mostly opaque in practice.

In this work we study a family of models of the form
\begin{equation}\label{eq:model_family}
f_k(\vx;\mA_{0:n}) = \lambda_k(\mathcal{A}(\vx)), \quad \mathcal{A}(\vx) = \mA_0 + \sum_{i=1}^n \evx_i \mA_i,
\end{equation}
where $\lambda_k$ is the $k$-th smallest eigenvalue. The learnable parameters are the symmetric matrices $\mA_0, \ldots, \mA_n \in \symd$, collectively referred to as $\mA_{0:n}$, and the model has one hyperparameter - the eigenvalue index $k$. We call such a model a \emph{spectral neuron}, due to its similarity to the classical ``neuron'' in machine learning - a composition of a nonlinear function onto a linear map. The idea is illustrated in \figref{fig:spectral_neuron_diagram}.

\begin{figure}[htbp]
    \centering
    \includegraphics[width=0.66\linewidth]{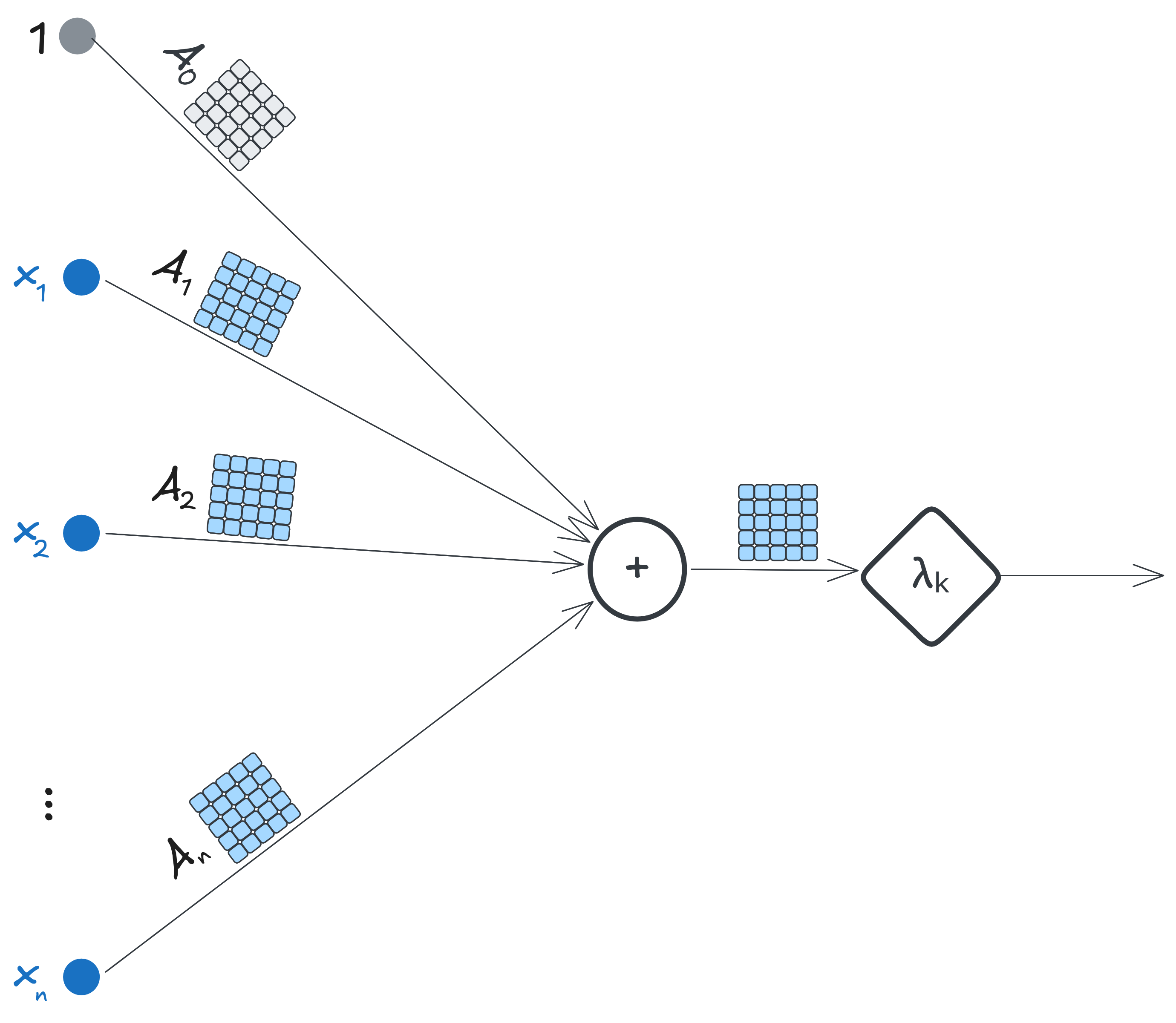}
    \caption{A diagram of the spectral neuron. The ``weights'' on the edges are matrices, rather than numbers. The ``non-linearity'' is an eigenvalue.}
    \label{fig:spectral_neuron_diagram}
\end{figure}

We find this model family worthy of systematic study because it combines
several properties that rarely occur together: (a) its expressive power grows
with the matrix dimension, providing a natural way to scale the model; (b) it
can be implemented and trained using standard eigensolvers, automatic
differentiation, and gradient-based optimization, without requiring a
specialized learning algorithm; (c) the learned matrices and their eigenspaces
provide coefficient transparency, supporting mathematically grounded analyses
of shape and feature influence; and (d) the coefficient matrices provide
easily computable global feature-influence bounds.

Many of the mathematical ingredients underlying these properties are
classical, and appear separately in matrix analysis, variational analysis,
perturbation theory, and semidefinite optimization. The main contribution of
this work is to bring them together into a unified treatment of the
\emph{spectral neuron} as a machine-learning primitive, and to study the
expressivity, shape control, coefficient transparency, feature influence, optimization, and
practical behavior of the resulting model family. Moreover, we show a good initialization strategy that works reasonably well in practice,  and conduct scaling experiments that corroborate our claims about the ability to scale the models.

\section{Related work}

The closest precedent, and the framework that already contains the core
construction studied here, is the parametric matrix model (PMM) framework of
\citet{cook2025parametric}. The authors establish universality and provide some physical-system interpretations. Our spectral neuron is just a special case of PMMs obtained by using an affine matrix pencil and a specific eigenvalue. Perhaps exactly because the PMM authors aimed to study a more general model and came from a different set of applications in mind, they may have overlooked that this special case combines shape control, global feature-influence bounds, and coefficient transparency while being able to improve with scaling, which is the main focus of this work.

Going the other direction, and restricting the affine matrix pencil further by assuming all matrices are simultaneously diagonalizable, and the model reduces to an order statistic of learned affine functions. The largest eigenvalue becomes a maxout unit
\citep{goodfellow2013maxout}, while internal eigenvalues become scalar
order-statistic or sorting units \citep{rennie2014deep,anil2019sorting}. Matrices which are not simultaneously diagonalizable allow us to gain additional expressive power over piecewise linear models.

Shape-constrained learning provides a further comparison. Monotonic and
input-convex networks, calibrated and deep lattices, and shape-constrained
additive models obtain guarantees through specialized architectural or
parameter restrictions \citep{sill1997monotonic,dugas2009functional,amir2017inputconvex,
gupta2016monotonic,you2017deeplattice,pya2015shape}. Despite their expressive power, deep neural networks lose transparency as they become deeper, while spectral neurons retain coefficient transparency as they scale. The mathematical structure remains the same - an eigenvalue of a symmetric matrix pencil.

Generalized additive models, models with selected pairwise interactions, and
neural additive models provide transparency by exposing low-dimensional feature
effects directly
\citep{hastie1986generalized,lou2013accurate,agarwal2021neural}. A spectral
neuron permits more general interactions and provides coefficient transparency
similar to that of linear models. In a linear model, each feature
has a scalar coefficient; in a spectral neuron, it has a coefficient matrix.
The spectral norm of this matrix provides a global feature-influence bound:
like the magnitude of a linear coefficient, it bounds the output change per
unit change in that feature over all inputs. At a particular input, the interaction between
the coefficient matrix and the selected eigenspace determines signed local
feature influence at simple eigenvalues and a local feature-influence bound at
repeated eigenvalues \citep{urruty1999clarkesubeigenvalue}. These quantities are
unchanged by a common orthogonal change of basis, and therefore belong to the
model itself rather than to an arbitrary matrix representation. Accumulating
the local responses along a path gives integrated-gradient attributions
\citep{sundararajan2017axiomatic}.

Neither local attribution nor global feature-influence bounds are unique to spectral
neurons. Gradients and integrated gradients are available for generic
differentiable models, while norm-based global guarantees are central to
Lipschitz-constrained networks \citep{anil2019sorting}. What distinguishes
spectral neurons is that both kinds of information follow directly from the
same coefficient matrices and remain just as simple as the model scales. A
larger spectral neuron can represent more complex functions, and become more
accurate, without becoming harder to reason about.

Finally, \emph{spectral} has a different meaning in classical machine learning.
Kernel principal component analysis, spectral clustering, and Laplacian
eigenmaps use eigenspaces of a dataset-level matrix to summarize, embed, or
partition observations
\citep{scholkopf1998nonlinear,ng2001spectral,belkin2003laplacian}. A spectral
neuron instead constructs a learned matrix for each input and uses one of its
eigenvalues as the prediction mechanism. The spectrum is part of the model
evaluated on each input, not a summary of the dataset.

\section{Notation}

The set of $d \times d$ symmetric matrices is denoted by $\symd$. The Euclidean norm of a vector $\vx$ is denoted by $\|\vx\|_2$, and similarily the spectral norm of a matrix $\mA$ is denoted by $\|\mA\|_2$. As the spectral theorem ensures that any $\mA \in \symd$ has $d$ real eigenvalues, we denote by $\lambda_i(\mA)$ the $i$-th \emph{smallest} eigenvalue of $\mA$. Note that this typically differs from many texts, that define $\lambda_i$ to be the $i$-th largest. For convenience, the smallest and largest eigenvalues are denoted by $\lambda_{\min}$ and $\lambda_{\max}$, respectively.

As a notational syntactic shortcut, we define $\mathcal{A}(\vx) \equiv \mA_0 + \sum_{i=1}^n \vx_i \mA_i$, where the matrices $\mA_0, \ldots, \mA_n$ are defined in the context where $\mathcal{A}(\vx)$ is used. Similarly, we define $\mathcal{B}(\vx) \equiv \mB_0 + \sum_{i=1}^n \mB_i$.

\section{Modeling power}
In this section we dive into continuity properties, representation power, and ways to reason about the model family that sheds light into its behavior. To gain intuition, we first inspect \figref{fig:1d_eigenvalues}, where a set of univariate functions $f_k$ are plotted, of the form 
\[
    f_k(x; \mA, \mB) = \lambda_k(\mA + x \mB),
\]
where $\mA$ and $\mB$ are $5 \times 5$ symmetric matrices with entries sampled from $[0, 1]$ uniformly at random.
\begin{figure}[htbp]
    \centering
    \begin{subfigure}[c]{\textwidth}
        \centering
        \includegraphics[width=.75\textwidth]{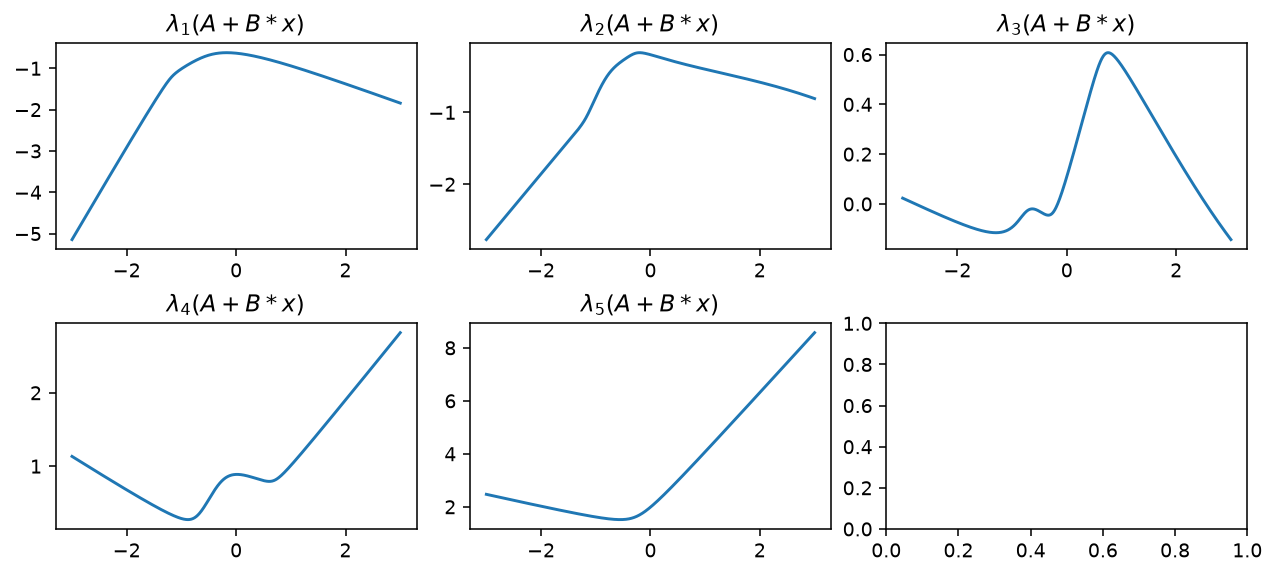}
        \caption{Gallery plot - each model in its own axis}
    \end{subfigure}
    \begin{subfigure}[c]{\textwidth}
        \centering
        \includegraphics[width=.75\textwidth]{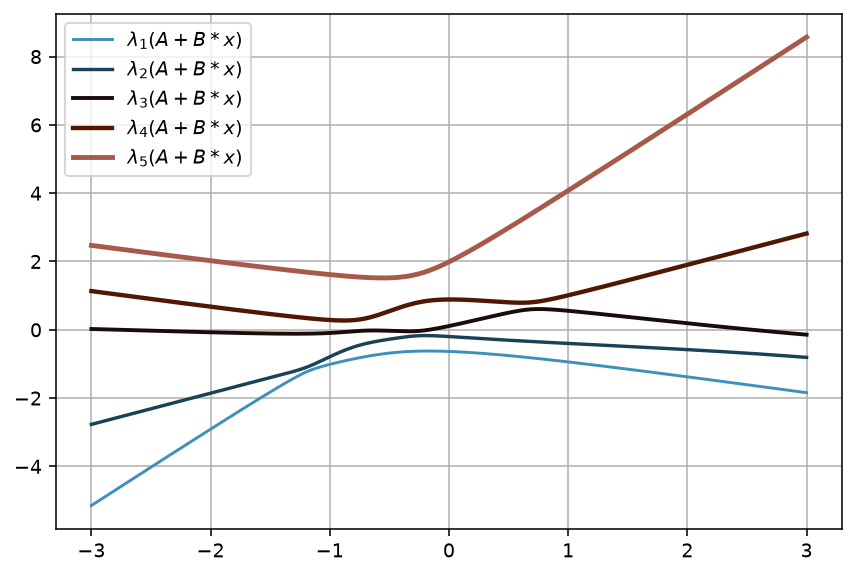}
        \caption{All models in the same axis}
    \end{subfigure}
    \caption{A set of models $\lambda_k(\mA + x \mB)$ for different values of $k$ for $5 \times 5$ symmetric matrices.}
    \label{fig:1d_eigenvalues}
\end{figure}
First, we see that all functions are continuous.  Moreover, $f_1$ is concave, $f_5$ is convex, and the most ``wiggly'' function is $f_3$ that corresponds to the mid-eigenvalue. Finally, we can see from $\lambda_2$ that the function may have a ``kink'', meaning it is not necessarily smooth. These phenomena, of course, are not a coincidence, and we devote this section formally defining and studying them.

\subsection{Continuity}\label{sec:continuity}
For any two matrices $\mA,\mB \in \sS^d$ we have
\[
|\lambda_k(\mA) - \lambda_k(\mB)| \leq \|\mA - \mB\|_2,
\]
where $\|\cdot\|_2$ is the spectral norm operator. This is well-known result in matrix perturbation theory \citep[Corollary~IV.4.9]{stewart1990matrix}. Consequently, we have the following corollary.
\begin{corollary}\label{thm:glob_sensitivity}
For any $\vx, \bm{\delta} \in \sR^n$, we have:
\begin{equation}\label{eq:gen_glob_sensitivity}
|f_k(\vx + \bm{\delta};\mA_{0:n}) - f_k(\vx;\mA_{0:n})| \leq \left \| \sum_{i=1}^n \delta_i \mA_i \right \|_2 \leq \sum_{i=1}^n |\delta_i| \|\mA_i\|.
\end{equation}
In particular, when $\bm{\delta} = \ve_i$, the $i$-th unit vector, we have:
\begin{equation}\label{eq:gen_feat_sensitivity}
|f_k(\vx + \ve_i;\mA_{0:n}) - f_k(\vx;\mA_{0:n})| \leq \| \mA_i \|_2.
\end{equation}
\end{corollary}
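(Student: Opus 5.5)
The plan is to apply the Weyl-type perturbation bound quoted just before the statement, $|\lambda_k(\mA)-\lambda_k(\mB)|\le\|\mA-\mB\|_2$ for $\mA,\mB\in\symd$, directly to the two matrix pencils evaluated at $\vx+\bm{\delta}$ and at $\vx$.

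First, note that both $\mathcal{A}(\vx+\bm{\delta})$ and $\mathcal{A}(\vx)$ lie in $\symd$, since they are real linear combinations of the symmetric matrices $\mA_0,\ldots,\mA_n$. Hence $f_k(\vx+\bm{\delta};\mA_{0:n})=\lambda_k(\mathcal{A}(\vx+\bm{\delta}))$ and $f_k(\vx;\mA_{0:n})=\lambda_k(\mathcal{A}(\vx))$ are well defined. Next, use the affinity of $\mathcal{A}$: the constant term $\mA_0$ cancels in the difference, and
\[
\mathcal{A}(\vx+\bm{\delta})-\mathcal{A}(\vx)=\sum_{i=1}^n \delta_i \mA_i .
\]
Substituting $\mA=\mathcal{A}(\vx+\bm{\delta})$ and $\mB=\mathcal{A}(\vx)$ into the perturbation bound gives the first inequality of \eqref{eq:gen_glob_sensitivity}.

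The second inequality follows from the triangle inequality for the spectral norm together with its absolute homogeneity, $\|\delta_i\mA_i\|_2=|\delta_i|\,\|\mA_i\|_2$. Finally, \eqref{eq:gen_feat_sensitivity} is the special case $\bm{\delta}=\ve_i$. Here the sum collapses to the single term $\mA_i$, so the middle expression already equals $\|\mA_i\|_2$.

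There is no real obstacle, since everything rests on the cited perturbation result, which we assume. The only point needing care is that the eigenvalue index $k$ is the same on both sides, with consistent ascending ordering; the cited bound holds for each fixed $k$ under either ordering convention. I would also read the unsubscripted $\|\mA_i\|$ in the statement as the spectral norm.
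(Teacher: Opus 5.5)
Your proof is correct and is the argument the paper has in mind. The paper states the corollary as an immediate consequence of the perturbation bound $|\lambda_k(\mA)-\lambda_k(\mB)|\le\|\mA-\mB\|_2$, applied to $\mathcal{A}(\vx+\bm{\delta})$ and $\mathcal{A}(\vx)$, whose difference is $\sum_{i=1}^n \delta_i \mA_i$; the second inequality then follows from the triangle inequality and homogeneity of the norm, exactly as you spell out.
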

Corollary \ref{thm:glob_sensitivity} is a rare combination of a powerful result with a trivial proof. It shows us that $f_k$, as a function of $\vx$, is Lipschitz continuous everywhere. The Lipschitz constant is governed by the spectral norms of the matrices $\mA_0, \ldots, \mA_i$. And in particular, the spectral norm of $\mA_i$ directly exposes a global feature-influence bound.

Thus, in the context of machine learning we treat $\|\mA_i\|_2$ as a global feature-influence bound, analogous to the magnitude of the $i$-th coefficient in a simple linear model. In particular, when training, we may obtain a Lasso-like feature selection effect by adding $\sum_{i=1}^n \|\mA_i\|_2$ as a regularization term.

\subsection{Universality}
The last section shows our model family cannot represent functions that are not globally Lipschitz. In particular, we cannot represent discontinuous functions. But this leaves us with the important question - what functions \emph{can} we represent?

Let's start from the case of extremal eigenvalues. It is well-known that for $\mA \in \symd$ we have
\[
\lambda_{\min}(\mA) = \min_{\vx} \{ \vx^T \mA \vx : \|\vx\|_2 = 1 \},
\]
and 
\[
\lambda_{\max}(\mA) = \max_{\vx} \{ \vx^T \mA \vx : \|\vx\|_2 = 1 \}.
\]
The above equations are known as the \emph{Rayleigh--Ritz variational principle} \citep[Theorem~4.2.2]{horn2013matrix}. In particular, the function $q(\mA, \vx)=\vx^T \mA \vx$ is a \emph{linear} function of $\mA$, and thus $\lambda_1$ is concave as a minimum of linear functions, whereas $\lambda_d$ is convex as a maximum of linear functions. Consequently, we have the following result:
\begin{claim}\label{thm:shape}
    The model $f_1(\vx, \mA_{0:n})$ as defined in \eqref{eq:model_family} is a concave function of the feature vector $\vx$, whereas $f_d(\vx, \mA_{0:n})$ is a convex function of $\vx$.
\end{claim}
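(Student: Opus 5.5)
The plan is to write $f_1$ and $f_d$ as compositions of $\lambda_{\min}$ and $\lambda_{\max}$ with the affine map $\vx \mapsto \mathcal{A}(\vx)$, and then use the standard fact that concavity (resp.\ convexity) is preserved under composition with an affine map. So two ingredients are needed: concavity of $\lambda_{\min}$ and convexity of $\lambda_{\max}$ on $\symd$, and affineness of $\mathcal{A}$.

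\textbf{Step 1.} Establish that $\lambda_{\min}$ is concave on $\symd$. By the Rayleigh--Ritz principle stated above, $\lambda_{\min}(\mA) = \min_{\|\vu\|_2=1} \vu^T \mA \vu$. For fixed unit $\vu$, the map $\mA \mapsto \vu^T\mA\vu$ is linear. I will verify concavity directly rather than cite the general principle. Take $\mA,\mB \in \symd$ and $t\in[0,1]$, and let $\vu$ be a unit minimizer for $t\mA+(1-t)\mB$. Then
\[
\lambda_{\min}(t\mA+(1-t)\mB) = t\,\vu^T\mA\vu + (1-t)\,\vu^T\mB\vu \ge t\lambda_{\min}(\mA)+(1-t)\lambda_{\min}(\mB).
\]
The same argument with the maximizing vector and the reversed inequality shows that $\lambda_{\max}$ is convex.

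\textbf{Step 2.} Note that $\mathcal{A}$ is affine in $\vx$. For $\vx,\vy\in\sR^n$ and $t\in[0,1]$, $\mathcal{A}(t\vx+(1-t)\vy) = t\mathcal{A}(\vx)+(1-t)\mathcal{A}(\vy)$, because the constant term $\mA_0$ splits as $t\mA_0+(1-t)\mA_0$. Substituting this into Step 1 gives
\[
f_1(t\vx+(1-t)\vy) = \lambda_1\bigl(t\mathcal{A}(\vx)+(1-t)\mathcal{A}(\vy)\bigr) \ge t f_1(\vx)+(1-t)f_1(\vy),
\]
which is concavity of $f_1$. The analogous computation shows that $f_d$ is convex. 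Equivalently, one can use $f_d(\vx;\mA_{0:n}) = -f_1(\vx;-\mA_{0:n})$, since $\lambda_{\max}(\mA)=-\lambda_{\min}(-\mA)$.

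\textbf{Main obstacle.} There is essentially none. The only point needing care is that the minimum in Rayleigh--Ritz is attained, since the unit sphere is compact and the quadratic form is continuous. This justifies choosing a minimizer in Step 1. Alternatively, the argument can be phrased as ``an infimum of linear functions is concave,'' which avoids attainment altogether.
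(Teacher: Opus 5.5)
Your proposal is correct and follows essentially the same route as the paper: via the Rayleigh--Ritz characterization, $\lambda_{\min}$ (resp.\ $\lambda_{\max}$) is a pointwise minimum (resp.\ maximum) of functions linear in the matrix, hence concave (resp.\ convex), and composing with the affine pencil $\mathcal{A}(\vx)$ preserves this. The only difference is that you write out the two-point inequality and the affine-composition step, which the paper leaves implicit.
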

In fact we have an even stronger result - this family comprises universal approximators for convex and concave functions on compact domains, as shown in \citet{orielly2023spectrahedral}. Formally stated,
\begin{claim}\label{thm:max_eig_universal}
   Let $\sK$ be a compact convex subset of $\sR^n$, let $h: \sK \to \sR$ be convex, and let $\varepsilon > 0$. Then, there exists a sufficiently large $d$ and $\mA_0, ..., \mA_n \in \symd$ such that $|h - f_d(\cdot, \mA_{0:n})|_\infty \leq \epsilon$. 
\end{claim}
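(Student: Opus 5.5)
The plan is to reduce the claim to the classical fact that a convex function on a compact convex set is a uniform limit of maxima of finitely many affine functions, and then to realize such a maximum exactly as a largest eigenvalue of a diagonal pencil.

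\textbf{Step 1 (polyhedral approximation).} Fix $\varepsilon>0$. If $h$ is continuous on $\sK$, it is uniformly continuous, since $\sK$ is compact. I would pick a finite $\varepsilon$-net $\vx^{(1)},\ldots,\vx^{(d)}$ of $\sK$ and at each point take an affine minorant
\[
\ell_j(\vx) = h(\vx^{(j)}) + \vg_j^T(\vx-\vx^{(j)}), \qquad \vg_j \in \partial h(\vx^{(j)}).
\]
Set $p=\max_j \ell_j$. By convexity $p\le h$, and $p(\vx^{(j)})\ge h(\vx^{(j)})$ at every net point.

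\textbf{Step 2 (main obstacle: subgradients may not exist or be bounded).} The difficulty is that subgradients need not exist, or can blow up, at the boundary of $\sK$ (consider $-\sqrt{1-x^2}$ on $[-1,1]$). Without a bound on the $\vg_j$, the net argument fails. I would avoid this as follows. Apply the construction on a slightly shrunken set $\sK_\delta \subset \operatorname{relint}\sK$, on which $h$ is Lipschitz with some constant $L_\delta$. Then use uniform continuity of $h$ on $\sK$ to control the thin layer $\sK\setminus \sK_\delta$.

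A cleaner alternative is to invoke the standard theorem directly: a continuous convex function on a compact convex set is the uniform limit of finite maxima of affine functions. This follows from Dini's theorem applied to the increasing family of finite maxima of affine minorants, whose supremum is $h$ by the Fenchel--Moreau theorem. I would use this route.

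The claim as stated needs $h$ continuous. A convex function on a compact set can jump upward at the boundary, and Corollary~\ref{thm:glob_sensitivity} shows $f_d$ is continuous, so no such jump can be approximated. I would therefore add continuity as a standing assumption, or equivalently restrict to $h$ continuous on $\sK$.

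\textbf{Step 3 (exact spectral realization).} Given $p(\vx)=\max_{1\le j\le d}(c_j+\va_j^T\vx)$, I would set
\[
\mA_0=\operatorname{diag}(c_1,\ldots,c_d), \qquad \mA_i=\operatorname{diag}(a_{1i},\ldots,a_{di}).
\]
Then $\mathcal{A}(\vx)$ is diagonal, so its eigenvalues are exactly the affine values $c_j+\va_j^T\vx$. Hence $f_d(\vx;\mA_{0:n})=\lambda_{\max}(\mathcal{A}(\vx))=p(\vx)$ exactly.

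Combining this with Step 1 gives $\sup_{\sK}|h-f_d|\le\varepsilon$. Here $d$ is the number of affine pieces, which answers the ``sufficiently large $d$'' in the statement. This is the maxout special case mentioned in the related-work discussion.
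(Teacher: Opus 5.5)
Your proof is correct, and the continuity hypothesis you add is necessary rather than cosmetic. On $\sK=[0,1]$ the function with $h(0)=1$ and $h(x)=0$ for $0<x\le 1$ is convex. Yet every $f_d$ is Lipschitz by Corollary~\ref{thm:glob_sensitivity}, so no $f_d$ comes within $\varepsilon<\tfrac12$ of $h$. The paper gives no argument of its own and simply defers to \citet{orielly2023spectrahedral}, so yours is a self-contained alternative. Its final form is sound: extend $h$ by $+\infty$ off $\sK$, which is lower semicontinuous because $h$ is continuous on the closed set $\sK$. Fenchel--Moreau then gives, for each $\vx\in\sK$, an affine minorant $\ell_{\vx}$ with $\ell_{\vx}(\vx)>h(\vx)-\varepsilon$. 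Extract a finite subcover of the relatively open sets $\{\vy\in\sK:\ell_{\vx}(\vy)>h(\vy)-\varepsilon\}$; this is Dini's theorem for the upward-directed family of finite maxima. The resulting max-affine $p$ satisfies $h-\varepsilon<p\le h$ and is realized exactly by your diagonal pencil, so the subgradient net and the shrinking of $\sK$ can be discarded. What this route buys is an elementary proof that universality already holds inside the commuting (maxout) subfamily, in line with the paper's related-work remark. What it does not buy is anything beyond max-affine approximation rates, because it never uses non-commuting coefficient matrices. The cited work, by contrast, is about quantitative approximation bounds for the genuinely spectral model, and that extra structure can matter. On the unit ball with $n\ge2$, your diagonal construction needs $d\to\infty$ as $\varepsilon\to0$ to approximate $\|\vx\|_2$. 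Yet $\|\vx\|_2$ is exactly $\lambda_{\max}$ of the $(n+1)\times(n+1)$ pencil whose first row is $(0,\vx^T)$, whose first column is its transpose, and which vanishes elsewhere.
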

But what about the remaining eigenvalues? Cook et. al. in \cite{cook2025parametric} showed that for any internal eigenvalue $1 < k < d$, where $d$ is the matrix dimension, the family $f_k$ we consider in this paper comprise universal approximators for all continuous functions on compact domains. Thus, in terms of approximation power, the family is similar to multilayer perceptrons. 

\subsection{Orthogonal invariance}\label{sec:orth_invariance}
A well known linear algebra property is that for any orthogonal matrix $\mQ$ and any symmetric matrix $\mA$, we have
\[
    \lambda_k(\mQ^T \mA \mQ) = \lambda_k(\mQ).
\]
Thus, a spectral neuron parametrized by $\mA_0, \ldots, \mA_n$ represents exactly the same function as a neuron parametrized by $\mQ^T \mA_0 \mQ, \ldots, \mQ^T \mA_n \mQ$. In other words, the parametrization invariant under orthogonal similarity transformation.

Since any symmetric matrix $\mA$ has an eigenvalue decomposition $\mA = \mQ^T {\bm \Lambda} \mQ$ with $\bm \Lambda$ being diagonal, we may w.l.o.g choose one of the learned metrices to be diagonal without loosing expressive power. For example, we may choose $\mA_0$ to be diagonal, and the spectral neuron can be equivalently posed as
\[
\vx \to \lambda_k \left( \operatorname{diag}(\va_0) + \sum_{i=1}^n x_i \mA_i \right),
\]
with the vector $\va_0$ and the matrices $\mA_1, \ldots, \mA_n$ being its learnable parameters.

Invariance under orthogonality is not unique to the spectral neuron. For example, matrix factorization models \citep{srebro2003matrixFactorization,thurstone1931factoranalysis} are also invariant under (slightly different) orthogonality transformations.

\subsection{Latent-variable model}
Universality is an important theoretical property, but it does not lead to us actually understanding what the model does and being able to explain it to either ourselves or stakeholders. One way to gain insight into the model is using a characterization of eigenvalues as solutions of optimization problems.

First, consider the Courant variational characterization of the $k$-th smallest eigenvalue \citep{horn2013matrix}:
\begin{claim}
    Let $\mA \in \symd$. Then,
    \[
    \lambda_k(\mA) = \max_{\mC \in \sR^{(k-1)\times d}} \min_{\vu \in \sR^d} \left\{ \vu^T \mA \vu : \|\vu\|_2 = 1,\, \mC \vu = \vzero \right\}.
    \]
\end{claim}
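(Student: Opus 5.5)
The plan is to prove two inequalities by diagonalizing $\mA$ and using the Rayleigh--Ritz principle stated above. By the spectral theorem, write $\mA = \sum_{j=1}^d \lambda_j \vq_j \vq_j^T$ with $\lambda_1 \le \cdots \le \lambda_d$ and orthonormal eigenvectors $\vq_1,\ldots,\vq_d$. For a unit vector $\vu = \sum_j c_j \vq_j$ we have $\vu^T \mA \vu = \sum_j \lambda_j c_j^2$ with $\sum_j c_j^2 = 1$. Every bound below reduces to this weighted-average identity. Note that the constraint set $\{\|\vu\|_2=1,\ \mC\vu=\vzero\}$ is always nonempty, since $\ker \mC$ has dimension at least $d-(k-1)\ge 1$. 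It is also compact, so the inner minimum is attained. For $k=1$ the matrix $\mC$ is empty and the claim is exactly Rayleigh--Ritz.

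\textbf{Upper bound (every $\mC$ gives at most $\lambda_k$).} Fix any $\mC \in \sR^{(k-1)\times d}$. The subspace $\ker\mC$ has dimension at least $d-k+1$, and $\operatorname{span}\{\vq_1,\ldots,\vq_k\}$ has dimension $k$. Since these dimensions sum to more than $d$, the two subspaces intersect nontrivially. Pick a unit vector $\vu$ in the intersection. Then $\vu=\sum_{j\le k} c_j\vq_j$, so
\[
\vu^T\mA\vu=\sum_{j\le k}\lambda_j c_j^2\le \lambda_k.
\]
Hence the inner minimum is at most $\lambda_k$ for every $\mC$, and so is the outer maximum.

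\textbf{Lower bound (some $\mC$ attains $\lambda_k$).} Take $\mC$ to be the matrix whose rows are $\vq_1^T,\ldots,\vq_{k-1}^T$. Then $\mC\vu=\vzero$ forces $c_1=\cdots=c_{k-1}=0$, so
\[
\vu^T\mA\vu=\sum_{j\ge k}\lambda_j c_j^2\ge\lambda_k.
\]
Equality holds at $\vu=\vq_k$, so for this $\mC$ the inner minimum equals $\lambda_k$. Combined with the upper bound, the maximum over $\mC$ is attained and equals $\lambda_k$.

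The only nontrivial step is the dimension-counting intersection argument in the upper bound. It uses $\dim(U\cap V)\ge \dim U+\dim V-d$, which handles every $\mC$ uniformly, including rank-deficient ones, because rank deficiency only enlarges $\ker\mC$.
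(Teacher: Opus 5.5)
Your proof is correct and complete. The intersection of $\ker\mC$ with $\operatorname{span}\{\vq_1,\ldots,\vq_k\}$ gives the upper bound for every $\mC$, and the rows $\vq_1^T,\ldots,\vq_{k-1}^T$ attain $\lambda_k$. The paper gives no proof of its own; it cites this as the Courant--Fischer theorem from Horn and Johnson, whose standard proof is essentially your argument.
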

Consequently, our model family can be written as
\[
f_k(\vx, \mA_{0:n}) = \max_{\mC \in \sR^{(k-1)\times d}} \min_{\vu \in \sR^d} \left\{
    \vu^T \left(\mA_0 + \sum_{i=1}^n x_i \vu^T \mA_i \vu\right) \vu : \| \vu\|_2 = 1, \, \mC \vu = \vzero
\right\}.
\]
In other words, $f_k$ is a \emph{latent variable model}. A model produces a latent matrix $\mathcal{A}(\vx)$, which is then used to solve an optimization problem with the latent variables $\mC$ and $\vu$ in the form of a $\min-\max$ game between two players. The first ``outer'' player chooses the latent matrix $\mC$, and in response, the second ``inner'' player chooses a latent unit vector $\vu$ orthogonal to all its rows, and pays $\vu^T \mathcal{A}(\vx) \vu$ to the first player. Naturally, the first player aims to maximize their revenue, whereas the inner player aims to minimize the incurred cost. 

Beyond mathematical theory, this can also be used to explain the model to a stakeholder, or a regulator. For example, consider modeling insurance risk for a client described by $\vx$. We may think of $\mathcal{A}(\vx)$ as a set of latent \emph{skills} to avoid filing a claim, where each matrix $\mA_i$ fulfills the latent skills associated with one feature that are summed up by $\mathcal{A}(\vx)$. Each entry $(i, j)$ corresponds to a skill dealing with a pair of difficulties. Then, we simulate a game between an adversarial (hence $\max$) environment $\mC$ that yields test cases $\vu$, and observe how the client can best (hence $\min$) cope with these tests. Indeed, by computing
\[
\vu^T \mathcal{A}(\vx) \vu = \langle \mathcal{A}(\vx), \vu \vu^T \rangle.
\]
we see that we take \emph{pairs of challenges} $\vu \vu^T$, and compute their weighted sum with weights $\mathcal{A}(\vx)$. 

This analogy of latent skills, of course, is not precise, due to the invariance under orthogonal similarities we just saw. Indeed, the ``latent skill'' matrices have multiple equivalent representations, and thus there are many possible ``laten skills'' that lead to the same prediction, but we believe it is still a useful analogy for communicating what the model does.

\subsection{A kind of a recurrent neural network}
A different variational characterization of the sequence of eigenvalues of a symmetric matrix leads to an interpretation of our model family as (a kind of) a recurrent neural network.
\begin{claim}
    Let $\mA \in \symd$ with eigenvalues $\lambda_1 \leq \ldots \leq \lambda_d$ and corresponding eigenvectors $\vu_1, \ldots, \vu_d$. Then,
    \[
        \lambda_k = \min_\vu \left\{
            \vu^T \mA \vu : \|\vu\|_2 = 1, \, \langle \vu, \vu_1 \rangle = 0, \, \ldots, \, \langle \vu, \vu_{k-1}\rangle = 0 
        \right\}.
    \]
\end{claim}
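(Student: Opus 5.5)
The plan is to reduce everything to the spectral theorem and expand $\vu$ in an orthonormal eigenbasis. We take $\vu_1,\ldots,\vu_d$ to be orthonormal eigenvectors of $\mA$ with $\mA\vu_j=\lambda_j\vu_j$; for a symmetric matrix such a choice always exists, and the statement should be read with this choice. Any $\vu\in\sR^d$ can then be written as $\vu=\sum_{j=1}^d c_j\vu_j$ with $c_j=\langle \vu,\vu_j\rangle$. Orthonormality gives
\[
\|\vu\|_2^2=\sum_{j=1}^d c_j^2 \quad\text{and}\quad \vu^T\mA\vu=\sum_{j=1}^d \lambda_j c_j^2 .
\]
This is the same computation that underlies the Rayleigh--Ritz principle quoted earlier. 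In effect, the claim is Rayleigh--Ritz applied to $\mA$ restricted to the invariant subspace $\operatorname{span}\{\vu_k,\ldots,\vu_d\}$.

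\textbf{Lower bound.} Let $\vu$ be feasible. The constraints $\langle\vu,\vu_j\rangle=0$ for $j<k$ say exactly that $c_1=\cdots=c_{k-1}=0$. Hence $\|\vu\|_2=1$ becomes $\sum_{j\ge k}c_j^2=1$. Because the eigenvalues are sorted, $\lambda_j\ge\lambda_k$ for every $j\ge k$, so
\[
\vu^T\mA\vu=\sum_{j\ge k}\lambda_j c_j^2\ \ge\ \lambda_k\sum_{j\ge k}c_j^2=\lambda_k .
\]
So the objective is at least $\lambda_k$ on the whole feasible set.

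\textbf{Attainment.} Take $\vu=\vu_k$. It is a unit vector, it is orthogonal to $\vu_1,\ldots,\vu_{k-1}$ by orthonormality, and $\vu_k^T\mA\vu_k=\lambda_k$. The feasible set is also compact (a closed subset of the unit sphere), so writing $\min$ rather than $\inf$ is justified. Together with the lower bound, this gives the identity.

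\textbf{Main obstacle.} The computation itself is routine; the only real subtlety is what $\vu_1,\ldots,\vu_{k-1}$ mean when eigenvalues repeat. If they were arbitrary eigenvectors, not mutually orthogonal, the expansion would fail and the claim could be false. For example, if $\lambda_1=\lambda_2$, two non-orthogonal vectors in that eigenspace would leave a different feasible set. I would therefore state explicitly at the start that the eigenvectors form an orthonormal system, which the spectral theorem guarantees can always be chosen. With that convention, repeated eigenvalues cause no problem: the argument above never uses distinctness, only the ordering $\lambda_j\ge\lambda_k$ for $j\ge k$.
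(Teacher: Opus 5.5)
Your proof is correct. The paper gives no proof of this claim: it invokes it as a classical variational characterization, alongside the Rayleigh--Ritz principle stated earlier. Your eigenbasis expansion is the standard argument behind that fact: the constraints force $c_1=\cdots=c_{k-1}=0$, the ordering of the eigenvalues gives the lower bound, and $\vu_k$ attains it. Reading the eigenvectors as orthonormal is the right convention.

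One correction to your closing remark, which does not affect the proof. Lack of orthogonality is not what can break the identity; linear dependence is. Let $r\le k$ be the first index with $\lambda_r=\lambda_k$, and suppose the listed eigenvectors are merely linearly independent.

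\begin{itemize}
\item For each eigenvalue below $\lambda_k$, its listed vectors still span its full eigenspace. So the constraints still force $\vu$ into the sum of eigenspaces for eigenvalues $\ge\lambda_k$, which gives $\vu^T\mA\vu\ge\lambda_k$.
\item The $\lambda_k$-eigenspace has dimension at least $k-r+1$. It therefore contains a unit vector orthogonal to $\vu_r,\ldots,\vu_{k-1}$, and that vector attains $\lambda_k$.
\end{itemize}

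So only the specific minimizer $\vu_k$ can be lost, not the value. In your example, if the eigenspace of $\lambda_1=\lambda_2$ is two-dimensional, two independent non-orthogonal vectors span it and leave the feasible set unchanged.

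The identity genuinely fails only under dependence. For instance, take $\lambda_1=\lambda_2<\lambda_3$, $\vu_1=\vu_2$ and $k=3$. Then the other unit direction of the bottom eigenspace is feasible, with value $\lambda_1<\lambda_3$.
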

In other words, the $k$-th smallest eigenvalue of $\mA$ is the minimum of $\vu^T \mA \vu$ among the unit vectors orthogonal to eigenvectors corresponding to previous eigenvalues. Thus, we can think of it as a recurrent process: computing each eigenvalue yields an eigenvector, and all eigenvectors up to $k-1$
are used to compute the $k$-th eigenvalue. The process is depicted in \figref{fig:recurrent_eigenvalue}.
\begin{figure}[tbhp]
\includegraphics[width=\textwidth]{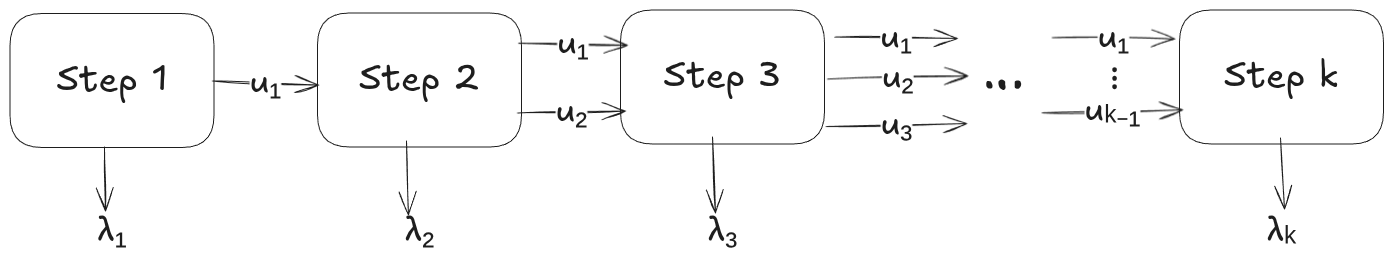}
\caption{Visualization of the $k$-th smallest eigenvalue as a recurrent neural network. Step $i$ computes the minimum of $\vu^T \mA \vu$ subject among unit vectors orthogonal to $\vu_1, \ldots, \vu_{i-1}$}
\label{fig:recurrent_eigenvalue}
\end{figure}

This gives us some intuition regarding the expressive power of the $k$-th eigenvalue. The ``simplest'' optimization problem, without any orthogonality constraints, gives us $\lambda_1$. Additional constraints introduce complexity to the optimization process, which schematically leads to a deeper recurrent network, and intuitively increase the expressive power of $f_k$. But this viewpoint may be deceiving, since there is a mirror-image of \emph{maximum} problems that yield the $j$-th \emph{largest} eigenvalue. Thus, from an intuitive perspective, the richest model corresponds to the mid eigenvalue for odd-sized matrices, and the two mid eigenvalues for even-sized matrices.

\subsection{A difference of convex functions}
Not only the largest eigenvalue, but also the sum of the largest eigenvalues,
\[
\Lambda_k(\mA) = \sum_{i=k}^d \lambda_i(\mA),
\]
is a convex function. This can be shown using Ky Fan's variational principle \cite{horn2013matrix}.
\begin{claim}[Ky Fan variational principle]
    Let $\mA \in \symd$. Then,
    \[
    \Lambda_k(\mA) = \max_{\mU \in \sR^{d \times (d-k+1)}} \left\{
        \operatorname{tr}(\mU^T \mA \mU) : \mU^T \mU = \mI
    \right\}.
    \]
\end{claim}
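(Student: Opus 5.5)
We prove the Ky Fan principle by establishing two inequalities. Throughout, set $m = d-k+1$. Take a spectral decomposition $\mA = \mV \bm{\Lambda} \mV^T$, where $\mV$ is orthogonal and $\bm{\Lambda} = \operatorname{diag}(\lambda_1,\ldots,\lambda_d)$ is ordered increasingly. Any feasible $\mU$ with $\mU^T\mU = \mI$ can be replaced by $\mV^T\mU$, which is also feasible, and $\operatorname{tr}(\mU^T \mA \mU) = \operatorname{tr}((\mV^T\mU)^T \bm{\Lambda} (\mV^T\mU))$. This is the same orthogonal invariance used in Section~\ref{sec:orth_invariance}. So it suffices to treat the diagonal case $\mA = \bm{\Lambda}$.

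\textbf{Achievability.} Let $\mU$ consist of the $m$ columns $\ve_k, \ldots, \ve_d$, i.e.\ the eigenvectors belonging to the $m$ largest eigenvalues. Then $\mU^T\mU = \mI$, and
\[
\operatorname{tr}(\mU^T \bm{\Lambda} \mU) = \sum_{i=k}^d \lambda_i = \Lambda_k(\mA).
\]
Hence the maximum is at least $\Lambda_k(\mA)$.

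\textbf{Upper bound.} For a general feasible $\mU$ we have
\[
\operatorname{tr}(\mU^T \bm{\Lambda} \mU) = \sum_{i=1}^d \lambda_i p_i, \qquad p_i = \|\text{row}_i(\mU)\|_2^2 = (\mU\mU^T)_{ii}.
\]
The weights satisfy two constraints.
\begin{itemize}
\item[(i)] $\sum_i p_i = \operatorname{tr}(\mU^T\mU) = m$.
\item[(ii)] $0 \le p_i \le 1$. This holds because $\mU\mU^T$ is an orthogonal projection, so $p_i = \ve_i^T \mU\mU^T \ve_i \le \|\ve_i\|_2^2 = 1$.
\end{itemize}
It remains to show that a linear function $\sum_i \lambda_i p_i$ over this polytope, with the $\lambda_i$ sorted increasingly, is at most $\sum_{i=k}^d \lambda_i$. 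This is an exchange argument:
\[
\sum_{i=k}^d \lambda_i - \sum_{i=1}^d \lambda_i p_i = \sum_{i\ge k} \lambda_i(1-p_i) - \sum_{i<k} \lambda_i p_i \ge \lambda_k \Big(\sum_{i\ge k}(1-p_i) - \sum_{i<k} p_i\Big) = \lambda_k\Big(m - \sum_i p_i\Big) = 0.
\]
The inequality holds because $1-p_i \ge 0$ with $\lambda_i \ge \lambda_k$ for $i\ge k$, and $p_i \ge 0$ with $\lambda_i \le \lambda_k$ for $i<k$.

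The only step needing care is the bound $p_i \le 1$, which follows from $\mU\mU^T$ being a projection. Everything else is routine. Convexity of $\Lambda_k$ then follows as in Claim~\ref{thm:shape}: $\Lambda_k$ is a pointwise maximum of the functions $\mA \mapsto \operatorname{tr}(\mU^T\mA\mU)$, each of which is linear in $\mA$.
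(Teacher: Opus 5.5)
Your proof is correct and complete. The paper does not prove this claim itself; it states it and defers to \citet{horn2013matrix}, so your argument gives a self-contained proof where the paper relies on a citation.

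The core of your route is the reduction, after diagonalizing, to $\operatorname{tr}(\mU^T\mA\mU)=\sum_i\lambda_i p_i$, with $p=\operatorname{diag}(\mU\mU^T)$ confined to the capped simplex $\{p: 0\le p_i\le 1,\ \sum_i p_i=m\}$. Your exchange inequality is exactly the fact that a linear functional with increasingly sorted coefficients is maximized over this polytope at the indicator vector of the top $m$ indices.

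A common textbook route goes instead through the Poincar\'e separation (interlacing) theorem, which follows from the Courant--Fischer characterization the paper states a few paragraphs earlier. For $\mU$ with orthonormal columns, $\lambda_j(\mU^T\mA\mU)\le\lambda_{j+k-1}(\mA)$ for $j=1,\ldots,m$, and summing over $j$ gives the upper bound; the maximizing $\mU$ is the same as yours.

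The interlacing route gives a stronger, eigenvalue-by-eigenvalue comparison, but it needs the min--max machinery. Yours needs only the spectral theorem and the observation that $\mU\mU^T$ is an orthogonal projection. That makes it the more elementary argument for exactly the trace identity the paper uses to conclude convexity of $\Lambda_k$.
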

By the Ky Fan principle, the sum of the top eigenvalues $\Lambda_k = \lambda_k + \ldots + \lambda_d$ is also a maximum of \emph{linear} function of $\mA$, and thus convex. Thus, we can obtain an explicit formula for our model $f_k$ as a difference of convex functions:
\[
f_k(\vx, \mA_{0:n}) \equiv \lambda_k(\mathcal{A}(\vx)) = \Lambda_k(\mathcal{A}(\vx)) - \Lambda_{k+1}(\mathcal{A}(\vx)).
\]

Differences of convex (DC) functions admit dedicated optimization theory and algorithms \citep{lethi2018dcprog}, and software for solving optimization problems involving DC functions \citep{shen2016disciplined}. This formulation is mainly useful in learn-then-optimize scenarios, where our model comprises a cost of some sort that we later aim to minimize or maximize at inference.

\subsection{Monotone functions}\label{sec:monotonicity}
Recall that a matrix $\mP \in \symd$ is called \emph{positive semi-definite} if for any nonzer ovector $\vu$ we have $\vu^T \mP \vu \geq 0$. The definition of negative semi-definite matrices is analogous. 

For a positive semi-definie $\mP \in \symd$ and $\mA \in \symd$, we have 
\[
\lambda_k(\mA + \mP) \geq \lambda_k(\mA).
\]
This is known as eigenvalue monotonicity, and can be immediately seen from the formulations of $\lambda_k$ as optimization problems involving a quadratic cost, since
\[
\vu^T (\mA + \mP) \vu = \vu^T \mA \vu + \underbrace{\vu^T \mP \vu}_{\geq 0} \geq \vu^T \mA \vu.
\]
This powerful property gives us the ability to model monotone functions. For example, if $\mA_7$ is positive semi-definite and $\mA_{17}$ is negative semi-definite, the learned model model $f_k$ is, by construction, a nondecreasing function of $\evx_7$ and a nonincreasing fnuction of $\evx_{17}$, while being unrestricted in shape in the remaining features.

In \figref{fig:monotone_1d} we can see univariate functions defined by two matrices $\mA, \mB \in \sS^6$ whose entires were sampled uniformly at random in $[0, 1]$, and then $\mB$ was transformed to a positive-semidefinite matrix by clipping its eigenvalues to $[0, \infty)$. Indeed, all functions are non-decreasing, with $f_1$ being concave and non-decreasing, whereas $f_6$ is convex and non-decreasing.

\begin{figure}[tbhp]
    \centering
    \begin{subfigure}[c]{\textwidth}
        \centering
        \includegraphics[width=.75\textwidth]{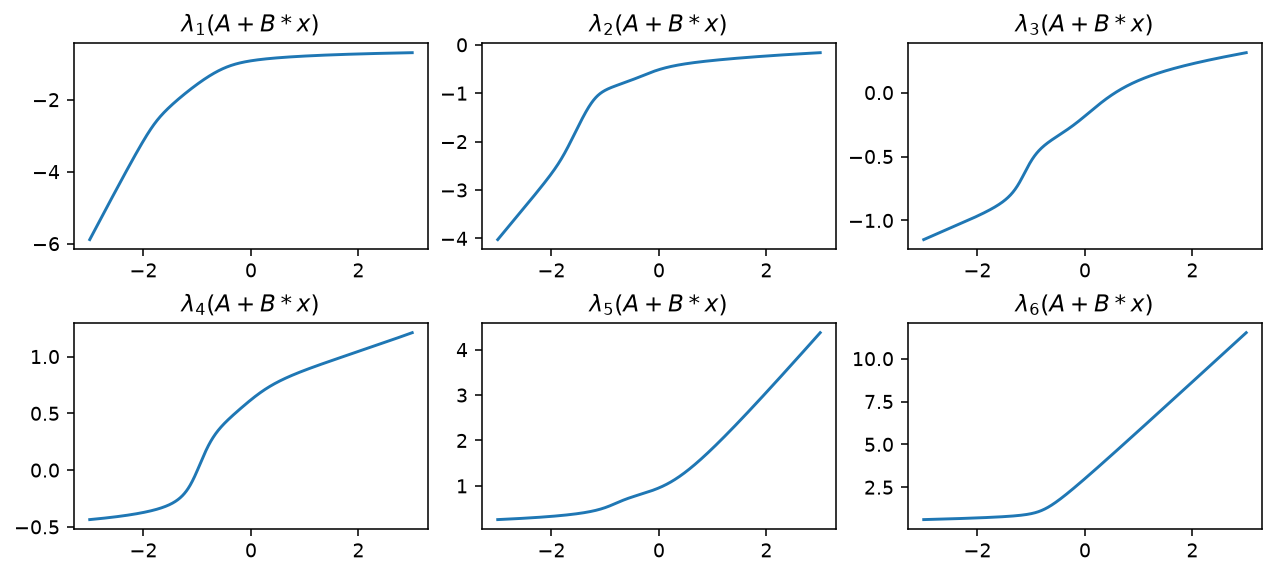}
        \caption{Gallery plot - each model in its own axis}
    \end{subfigure}
    \begin{subfigure}[c]{\textwidth}
        \centering
        \includegraphics[width=.75\textwidth]{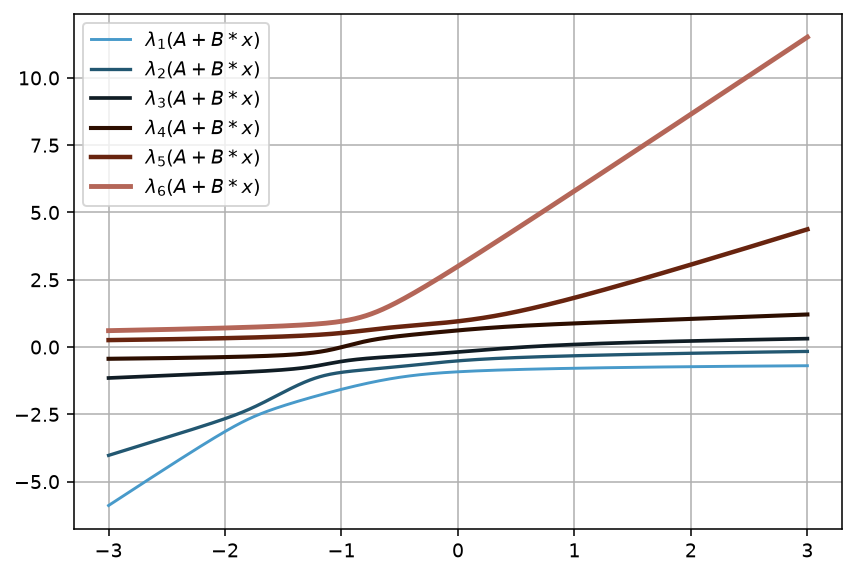}
        \caption{All models in the same axis}
    \end{subfigure}
    \caption{A set of models $\lambda_k(\mA + x \mB)$ for different values of $k$ for a general $\mA \in \sS^6$ and a positive semi-definite $\mB \in \sS^6$.}
    \label{fig:monotone_1d}
\end{figure}

\subsection{Summary}
In this section we've seen that modeling with $f_k$ provides control over the learned function and coefficient transparency through the learned matrices, much as linear models provide such transparency through their coefficients. Unlike linear models, the spectral neuron can model functions of arbitrary accuracy by scaling up the matrix dimensions. This unique combination of properties is rare to find. For example, deep neural networks, while possessing remarkable scaling properties, require extensive ``acrobatics'' to model convex functions, e.g., \emph{input-convex neural netwoks} by \citet{amir2017inputconvex}, or to partially monotone functions, e.g., \emph{deep lattice networks} by \citet{you2017deeplattice}, or even to make the influence of individual input features directly readable from their parameters.

However, the theoretical potential of the model family in scaling up with matrix size is not enough. Perhaps we aren't even able to learn the theoretically-possible models, or we might need matrices of prohibitively enormous size? To observe the behavior in practice we next study differential properties of the model family, to both gain insights, and the ability to train with standard machine-learning toolkits and their optimizers.

\section{Differentiation}
Learning a model $f_k$ from data requires being able to compute the gradient w.r.t the model's parametes, $\mA_{0:n}$. But apparently, from studying differentiation of $f_k$ we gain another insight - local feature influence. Namely, given a feature vector $\vx$ for which the model has made a prediction $f_k(\vx; \mA_{0:n})$, we can understand which features have the greatest and least local influence on the prediction for \emph{this specific instance} $\vx$.

\subsection{Generalized derivatives of $\lambda_k$}\label{sec:derivatives}
Since the $k$-th smallest eigenvalue is a non-smooth function, we cannot speak about its gradient, and we need some notion of generalized derivative. The notion of a Clarke subdifferential, which we define below, is a well accepted \citep{park2024what_ad_computes} notion of generalized derivatives used in automatic differentiation frameworks used in machine learning. One reason, for example, is the applicability of the chain rule.

\begin{definition}[Clarke directional derivative]
    Let $f$ be a function defined on an open set $\sD$.
    The Clarke directional derivative of a function $f$ at the point $\vx \in \sD$ is 
    \[
        f'(\vx; \vd) = \lim_{t \to 0+} \sup_{\vy \to \vx} \frac{f(\vy + t \vd) - f(\vy)}{t}
    \]
\end{definition}
This somewhat resembles the well-known directional derivative - the rate of change at $\vx$ in direction $\vd$, but we are not using the value of the function at $\vx$, and use a supremum in the neighborhood of $\vx$.

The Clarke sub-differential, which we denote by $\partial f(\vx)$, is defined in terms of the directional derivatives below.
\begin{definition}[Clarke sub-differential]
    Let $f$ be a function defined on an open set $\sD \subseteq \sR^m$. The Clarke sub-differential of a function $f$ at the point $\vx \in \sD$ is the set
    \[
    \partial f(\vx) = \{ \vs \in \sR^m: \sup_{\vd \in \sR^m} \{\langle \vs, \vd \rangle  - f'(\vx; \vd)\} \leq 0\}.
    \]
\end{definition}
Intuitively, it is the convex hull the linear slope models $\vs$ whose predicted rate of change $\langle \vs, \vd \rangle$  does not exceed the directional derivative in any direction $\vd$. 

A direct formula for the Clarke sub-differential of the $k$-th eigenvalue of a symmetric matrix has been  \citet{urruty1999clarkesubeigenvalue}, presented below.
\begin{claim}
    Let $E_k(\mA)$ be the \emph{eigenspace} associated with the $k$-th smallest eigenvalue of $\mA \in \symd$, namely, $E_k(\mA) = \{ \vv : \mA \vv = \lambda_k(\mA) \vv \}$. Then,
    \begin{equation}\label{eq:eigenvalue_subdifferential}
        \partial \lambda_k(\mA) = \operatorname{conv}\{ \vx \vx^T : \vx \in \E_k(\mA), \|\vx\|_2 = 1 \}.
    \end{equation}
\end{claim}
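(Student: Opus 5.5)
The plan is to realize $\lambda_k$ locally as a difference of two convex functions. The Ky Fan principle gives us both pieces, and we can compute their convex subdifferentials explicitly. Write $\lambda_k(\mA)$ with multiplicity: let $\lambda=\lambda_k(\mA)$ occupy positions $r,\dots,s$ in the ordered spectrum, so $r\le k\le s$, and let $E=E_k(\mA)$ have dimension $m=s-r+1$.

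\textbf{Step 1.} First handle the convex pieces. By the Ky Fan principle, $\Lambda_j$ is a maximum of the linear functions $\mA\mapsto\langle \mA,\mU\mU^T\rangle$ over $\mU^T\mU=\mI$. By Danskin's theorem, $\partial\Lambda_j(\mA)$ is the convex hull of $\mU\mU^T$ over maximizers. When $\lambda_{j-1}<\lambda_j$ the maximizer subspace is unique, so $\Lambda_j$ is differentiable there. This happens at $j=r$ and $j=s+1$.

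\textbf{Step 2.} Next localize. Near $\mA$, the eigenvalues in the cluster stay separated from the rest. Let $P(\mB)$ be the spectral projector onto the eigenvalues of $\mB$ in positions $r,\dots,s$; this is smooth by Riesz projection. Then
\[
\lambda_k(\mB)=\lambda_{k-r+1}\big(\mB|_{\mathrm{range} P(\mB)}\big).
\]
This reduces the problem to the $m$-dimensional restriction. At $\mA$ itself that restriction is $\lambda\mI_m$, and its derivative in direction $\mD$ is $\mD|_E$, i.e.\ $\mV^T\mD\mV$ for an orthonormal basis $\mV$ of $E$. By the chain rule for a smooth inner map, it suffices to prove the claim when $\mA=\lambda\mI_m$ and $E=\sR^m$:
\[
\partial\lambda_j(\lambda\mI_m)=\operatorname{conv}\{\vx\vx^T:\|\vx\|=1\}=\{\mG\succeq0,\ \operatorname{tr}\mG=1\}.
\]
This equality holds for every $j$, and pulling back through $\mV$ then gives the stated formula.

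\textbf{Step 3.} Now the Clarke derivative at $\lambda\mI$. Since $\lambda_j(\lambda\mI+\mB)=\lambda+\lambda_j(\mB)$ and $\lambda_j$ is positively homogeneous, the Clarke directional derivative at $\lambda\mI$ equals the Clarke directional derivative of $\lambda_j$ at $0$. For the inclusion ``$\subseteq$'', every $\lambda_j$ is $1$-Lipschitz in spectral norm and lies between $\lambda_{\min}$ and $\lambda_{\max}$. So $f'(0;\mD)\le\lambda_{\max}(\mD)$, which is the support function of the spectraplex $\{\mG\succeq0,\operatorname{tr}\mG=1\}$. This requires bounding
\[
\lambda_j(\mY+t\mD)-\lambda_j(\mY)\le t\lambda_{\max}(\mD),
\]
which follows from Weyl's inequality. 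For the reverse inclusion, we show $f'(0;\mD)\ge\lambda_{\max}(\mD)$. Diagonalize $\mD$ with top eigenvector $\vq$. Choose base points $\mY\to0$ that are diagonal in the same basis, ordered so that $\vq$'s coordinate sits exactly at position $j$ with gaps of order $\epsilon$ that exceed $t|\mD|$. For example, put the $\vq$ coordinate at value $0$, $j-1$ coordinates at $-\epsilon$, and the rest at $+\epsilon$, with $t\ll\epsilon$. Then
\[
\lambda_j(\mY+t\mD)-\lambda_j(\mY)=t\lambda_{\max}(\mD).
\]
Taking the limsup gives equality, so the Clarke subdifferential is exactly the set whose support function is $\lambda_{\max}(\mD)$, namely the spectraplex.

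\textbf{Main obstacle.} I expect the hardest part to be Step 2: the rigorous reduction to the cluster. This needs a Clarke chain rule with a smooth inner map $\mB\mapsto\mV(\mB)^T\mB\mV(\mB)$, with care that a smoothly chosen basis of $\mathrm{range} P(\mB)$ exists locally. Such a basis can be taken as the Gram--Schmidt orthonormalization of $P(\mB)\mV$. The chain rule holds with equality here because the inner map is a submersion at $\mA$, since its derivative $\mD\mapsto\mV^T\mD\mV$ is onto $\sS^m$.
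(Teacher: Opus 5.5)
Your argument is correct, and it goes further than the paper, which gives no proof of this claim. The paper imports it from \citet{urruty1999clarkesubeigenvalue} and reuses it as $\{\mV\mZ\mV^T:\mZ\succeq 0,\ \operatorname{tr}\mZ=1\}$ in (*) of the proof of Lemma~\ref{lem:local_sensitivity}. Your route is self-contained modulo standard tools:
\begin{itemize}
\item The smooth frame $\mV(\mB)$ for the Riesz projector of the cluster writes $\lambda_k$ near $\mA$ as $\lambda_{k-r+1}\circ\Phi$, with $\Phi(\mB)=\mV(\mB)^T\mB\mV(\mB)$.
\item Clarke's chain rule for a strictly differentiable inner map holds with equality, because $D\Phi(\mA)[\mD]=\mV^T\mD\mV$ is onto $\sS^m$ \citep{clarke1990optimization}.
\item At $\lambda\mI_m$ you compute the Clarke directional derivative exactly: Weyl's inequality gives the upper bound $\lambda_{\max}(\mD)$, and your diagonal sequences attain it.
\end{itemize}
Pulling back through $\mV$ yields exactly (*). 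The citation buys brevity. Your proof buys an explicit formula: the Clarke directional derivative of $\lambda_k$ at $\mA$ in direction $\mD$ is $\lambda_{\max}(\mV^T\mD\mV)$. This makes transparent why the Clarke subdifferential does not depend on where $k$ sits inside its multiplicity block. An even shorter self-contained route is Clarke's limiting-gradient formula. There, $\lambda_k$ has gradient $\vv\vv^T$ wherever it is simple, and the limits of these as $\mB\to\mA$ are exactly the $\vx\vx^T$ with $\vx\in E_k(\mA)$, $\|\vx\|_2=1$. Your approach has the advantage of also delivering the directional derivative.

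Two small remarks. First, Step~1 and the opening ``difference of convex functions'' plan play no role and can be dropped. The decomposition $\lambda_k=\Lambda_k-\Lambda_{k+1}$ alone only gives $\partial\lambda_k(\mA)\subseteq\partial\Lambda_k(\mA)-\partial\Lambda_{k+1}(\mA)$. That set is strictly larger in general when $r<k<s$; at $\mI_3$ with $k=2$ it contains $\operatorname{diag}(1,1,-1)$. So it is right that Steps~2--3 bypass it. Second, your lower-bound sequences couple $t$ and $\bm{Y}$ through $t\ll\epsilon$, so you are using the standard joint $\limsup_{\bm{Y}\to 0,\,t\downarrow 0}$. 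Say so explicitly: it is the correct reading of the paper's Definition. The paper's displayed iterated form would, by continuity of $\lambda_k$, collapse to the ordinary one-sided directional derivative.
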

As a direct consequence of the above, for any normalized eigenvector $\vv$ associated with $\lambda_k(\mA)$ we have
\[
    \vv \vv^T \in \partial \lambda_k(\mA).
\]
The points of differentiability are exactly the ones where the eigenvalue is simple, there eigenspace dimension is one, and thus we have only \emph{one} element in the sub-differential, which is the gradient
\begin{equation}\label{eq:eigenvalue_gradient}
    \nabla \lambda_k(\mA) = \vv_k(\mA) \vv_k(\mA)^T.
\end{equation}
Differentiation facilitates learning, since we may now use any machine learning framework's optimizers, loss functions, and facilities. In PyTorch \citep{ansel2024pytorch}, for example, this is already implemented in the \texttt{torch.linalg.eigh} function. But occasionally we may want \emph{not} to use this function, since it computes all eigenvalues and eigenvectors and is unable to pick only one. This capability, for example, is provided by SciPy \citep{virtanen2020scipy}, and allows significant savings of computational resources when running on a CPU. In this case, we might want to implement our own differentiation routine.

But beyond learning, knowing this formulas provides another insight - local feature-influence analysis, as we see below.

\subsection{Local feature influence}\label{sec:local_sensitivity}
Suppose we obtained an instance $\vx$ and our eigenvalue model produced a prediction
\[
    f_k(\vx; \mA_{0:n}) = \lambda_k(\mathcal{A}(\vx)) = \lambda_k\left( \mA_0 + \sum_{i=0}^n x_i \mA_i \right).
\]
How do we know which feature has the greatest local influence on the prediction? For example - can we explain to a regulator why we believe the insured is of high risk? Which features had the greatest local influence on the predicted risk?

Suppose we landed at an eigenvalue of multiplicity 1, and thus $\lambda_k(\mathcal{A}(\vx))$ is differentiable and the corresponding normalized eigenvector is $\vv$. By the chain rule, we have
\[
\frac{\partial f_k}{\partial x_i} = \langle \vv \vv, \mA_i \rangle = \vv^T \mA_i \vv.
\]
In other words, the quadratic form $\vv^T \mA_i \vv$ is the signed local feature influence of feature $i$ on the prediction.

The computation becomes slightly more complex when the eigenvalue has a multiplicity greater than one - we need to be slightly more careful. It is hard to obtain an exact value for signed local feature influence, but the following Lemma makes computing a local feature-influence bound tractable.
\begin{lemma}\label{lem:local_sensitivity}
Let $\mV$ be a matrix whose columns are an orthonormal basis spanning the complete eigenspace of $\mathcal{A}(\vx)$ associated with the eigenvalue $f(\vx) = \lambda_k(\mathcal{A}(\vx))$. Then for every feature $i$,  we have
\[
    \sup_{\vg \in \partial f_k(\vx)} |g_i| \leq \| \mV^T \mA_i \mV\|_2,
\]
where $\partial f_k$ denotes the Clarke sub-differential of $f_k$. Consequently,
\[
    \limsup_{t \to 0} \frac{|f_k(\vx + t \ve_i) - f_k(\vx)|}{|t|} \leq \left\| \mV^T \mA_i \mV \right\|_2.
\]
\end{lemma}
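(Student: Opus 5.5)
The plan is to combine the chain rule for Clarke subdifferentials with the eigenvalue subdifferential formula \eqref{eq:eigenvalue_subdifferential}, and then bound a convex combination of quadratic forms.

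Write $f_k = \lambda_k \circ \mathcal{A}$. Since $\mathcal{A}$ is affine, hence smooth with linear derivative $\vd \mapsto \sum_i d_i \mA_i$, and $\lambda_k$ is locally Lipschitz (Section~\ref{sec:continuity}), the Clarke chain rule gives
\[
\partial f_k(\vx) \subseteq \left\{ \bigl(\langle \mG, \mA_1\rangle, \ldots, \langle \mG, \mA_n\rangle\bigr) : \mG \in \partial \lambda_k(\mathcal{A}(\vx)) \right\}.
\]
This is the form of the chain rule for a Lipschitz outer function composed with a $C^1$ inner map; the inclusion suffices, and we do not need equality. So it is enough to bound $|\langle \mG, \mA_i\rangle|$ for every $\mG \in \partial\lambda_k(\mathcal{A}(\vx))$.

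By \eqref{eq:eigenvalue_subdifferential}, any such $\mG$ is a convex combination $\sum_j \theta_j \vu_j \vu_j^T$ with unit vectors $\vu_j$ in the eigenspace. Each such vector can be written $\vu_j = \mV \vy_j$ with $\|\vy_j\|_2 = 1$, because the columns of $\mV$ are an orthonormal basis of that eigenspace. Then
\[
\langle \mG, \mA_i\rangle = \sum_j \theta_j\, \vy_j^T (\mV^T \mA_i \mV) \vy_j.
\]
Each term is bounded in absolute value by $\|\mV^T\mA_i\mV\|_2$, since the quadratic form of a symmetric matrix on the unit sphere lies between its extreme eigenvalues. The convex combination therefore obeys the same bound, and taking the supremum over $\vg \in \partial f_k(\vx)$ gives the first claim.

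For the second claim, I would use the Clarke directional derivative with $\vd = \pm\ve_i$. By the definition and the standard support-function identity $f'(\vx;\vd) = \max_{\vg \in \partial f(\vx)} \langle \vg, \vd\rangle$, valid for locally Lipschitz $f$, we get $f_k'(\vx; \pm\ve_i) \le \sup_{\vg} |g_i|$. Taking $\vy = \vx$ in the definition of the Clarke derivative shows that the upper Dini quotients $\limsup_{t\to0^+} \bigl(f_k(\vx \pm t\ve_i) - f_k(\vx)\bigr)/t$ are dominated by $f_k'(\vx;\pm\ve_i)$. For the absolute value I also need the lower side. I would handle it by applying the same argument to the quotient based at $\vy = \vx \pm t\ve_i$ moving back to $\vx$, since $\vy \to \vx$ is allowed in the Clarke definition. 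Equivalently, $-f_k'(\vx;-\vd) \le \liminf$ of the quotient. Covering both $t\to0^+$ and $t\to0^-$ through the directions $\pm\ve_i$ then yields the $\limsup$ bound.

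The main obstacle is the bookkeeping around the chain rule and support-function facts for Clarke subdifferentials, which are standard (Clarke's book) but not stated in the paper. I would cite them rather than reprove them. The linear-algebra step is routine.
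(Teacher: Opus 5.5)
Your proposal is correct and follows the paper's proof: the Clarke chain rule through the affine map $\mathcal{A}$, the Hiriart-Urruty--Lewis formula for $\partial\lambda_k$, and a spectral-norm bound on $\langle \mV^T\mA_i\mV,\cdot\rangle$, followed by the support-function identity for the Clarke directional derivative. You phrase the bound via convex combinations of Rayleigh quotients, whereas the paper writes $\mV\mZ\mV^T$ and uses spectral/nuclear duality, but this is the same estimate; your handling of the lower Dini quotient via base points $\vy = \vx \pm t\ve_i$ spells out a step the paper dismisses as ``by definition of the Clarke directional derivative.''
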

In other words, we isolate an upper bounds for the magnitude of \emph{one} component of the sub-differential, and deduce a local feature-influence bound. The proof is slightly technical and not very insightful, and we show it below for completeness.
\begin{proof}
    Define the linear map $\mathcal{L}(\vh) = \sum_{i=1}^n \evh_i \mA_i$. Its adjoint is
    \[
        \mathcal{L}^*(\mG) = (\langle \mA_1, \mG \rangle, \ldots, \langle \mA_n, \mG \rangle). 
    \]
    The Clarke sub-differential of an ordered eigenvalue is \citep{urruty1999clarkesubeigenvalue}
    \[
        (\partial \lambda_k)(\mathcal{A}(\vx)) = \{ \mV \mZ \mV^T : \mZ \succeq 0, \operatorname{tr}(\mZ) = 1 \}. \tag{*}
    \]
    Since $\mathcal{A}(\vx)$ is affine and $\lambda_k$ Lipschitz, the Clarke chain rule \citep[Theorem~2.3.9]{clarke1990optimization} gives:
    \[
        \partial f_k(\vx) \subseteq \mathcal{L}^* (\partial f_k)(\mathcal{A}(\vx)).
        \tag{**}
    \]
    Combining (*) and (**) and using the rotation invariance of the trace operator, for every $\vg \in \partial f_k(\vx)$ there exists $\mZ \succeq 0$ with $\operatorname{tr}(\mZ) = 1$ such that 
    \[
        g_i = \langle \mA_i, \mV \mZ \mV^T \rangle =  \langle \mV^T \mA_i \mV, \mZ \rangle.
    \]
    By duality between the spectral and nuclear norms (generalized Cauchy-Schwartz),
    \begin{align*}
        |g_i| 
            &= |\langle \mV^T \mA_i \mV, \mZ \rangle| \\ 
            &\leq \| \mV^T \mA_i \mV \|_2 \| \mZ \|_* \\
            &= \| \mV^T \mA_i \mV \|_2.
    \end{align*}
    By \citet[Proposition 2.1.2]{clarke1990optimization}, the support function of $\partial f_k$ is its dirctional derivative, or formally,
    \[
        f_k'(\vx, \vh) = \max_{\vg \in \partial f_k(\vx)} \langle \vh, \vg \rangle.
    \]
    Therefore
    \[
        \max \{ f'_k(\vx, \ve_i), f'_k(\vx, -\ve_i) \} = \max \{ \max_{\vg \in \partial f_k(\vx)} \evg_i, \max_{\vg \in \partial f_k(\vx)} -\evg_i \} = \max_{\vg \in \partial f_k(\vx)} |g_i| \leq \| \mV^T \mA_i \mV \|.
    \]
    By definition of the Clarke directional derivative,
    \[
        \limsup_{t \to 0} \frac{|f_k(\vx + t \ve_i) - f_k(\vx)|}{|t|} \leq  \max \{ f'_k(\vx, \ve_i), f'_k(\vx, -\ve_i) \},
    \]
    proving the desired result.
\end{proof}

By Lemma \ref{lem:local_sensitivity}, a local feature-influence bound for the model $f_k = \lambda_k \circ \mathcal{A}$ under perturbation of feature $i$ can be computed as:
\begin{enumerate}
    \item Compute the spectral decomposition of $\mathcal{A}(\vx) = \mV^T \Lambda \mV$
    \item Compute range of eigenvalues equal to $\lambda_k(\mathcal{A}(\vx))$:
    \[m_- = \min_i \{\lambda_i(\mathcal{A}(\vx)) = \lambda_k(\mathcal{A}(\vx))\}, \quad m_+ = \max_i \{\lambda_i(\mathcal{A}(\vx)) = \lambda_k(\mathcal{A}(\vx))\},
    \] 
    and let $\mV_k$ be the matrix whose columns are $\mV_{m_-}, \mV_{m_- + 1}, \ldots, \mV_{m_+}$.
    \item Return the local feature-influence bound $\|\mV_k^T \mA_i \mV_k\|_2$.
\end{enumerate}

Finally, we note that the $\|\mV_k^T \mA_i \mV_k\|_2 \leq \| \mA_i \|_2$, and therefore the local feature-influence bound above is tighter than the global feature-influence bound. Consequently, whenever we \emph{can} use the local feature-influence bound to reason about a given prediction of the model, we probably should.

\section{Applications}
Below we discuss several potential applications of our model family.

\subsubsection*{Coefficient transparency with scaling}
The most direct application of the model family we study here is one that can improve with scaling of its size while retaining coefficient transparency analogous to that of linear models. Just as the magnitudes of linear-model coefficients expose feature influence, the spectral norms of the coefficient matrices $\mA_i$ expose feature-influence bounds. In \secref{sec:continuity} we saw that the spectral norm of $\mA_i$ is a global feature-influence bound, whereas in \secref{sec:local_sensitivity} the spectral norm of $\mV_k^T \mA_i \mV_k$, where $\mV_k$ is the matrix of the corresponding eigenvectors of $\mathcal{A}(\vx)$, is a local feature-influence bound associated with the specific instance $\vx$. However, unlike linear models, our family can grow in size by increasing matrix dimensions, and improve its representation power due to the universal approximation properties.

\subsubsection*{Shape-restricted models}
Suppose we partition the features $\{1, \ldots, n\}$ into three sets, $N_\uparrow \cup N_\downarrow \cup N_\star$, such that the model $f_k$ must be non-decreasing in $\evx_i$ for $i \in N_\uparrow$, non-increasing in $\evx_i$ for $i \in N_\downarrow$, and of arbitrary shape in the remaining features $N_\star$. 

As shown in \secref{sec:monotonicity}, we saw that by restricting $\mA_i$ for $i \in N_\uparrow$ to be positive semi-definite, and $\mA_i$ for $i \in N_\downarrow$ to be negative semi-definite, we obtain exactly the desired monotonicity properties. In practice, learning such a model can be done by parametrization: we can parametrize $\mA_i = \mL_i \mL_i^T$ for $i \in N_\uparrow$, and $\mA_i = -\mL_i \mL_i^T$ for $i \in N_\downarrow$, and learn the matrices $\mL_i$ by backpropagation.

Of course, we can mix and match with convexity and concavity. A convex model with $d$-dimensional matrices is obtained by $f_d$, whereas a concave model is obtained by $f_1$. By choosing an appropriate eigenvalue index and appropriate semi-definite matrix parameterizations we can obtain a convex / concave model that does not decrease in features $N_\uparrow$ and does not increase in features $N_\downarrow$.

\subsubsection*{Hyper-network setting}
Suppose that we have features $\vx \in \sR^n$ for which we need coefficient transparency / shape constaints, and features $\vy$ for which we do not need. We can build an arbitrarily-shaped neural network that predicts the symmetric matrices $\mA_0, \ldots, \mA_n$, which are the parameters of the spectral neuron $f_k(\evx; \mA_{0:n})$. Models which predicts the coefficients of other models are occasionally known as hyper-networks \citep{ha2016hypernetworks}.

As a concrete example, consider a model aiming to predict the probability of winning an auction given the auction opportunity features and bid. Such models are extensively used in online advertising by bid shading algorithms \citep{gligorijevic2020cikm,zhouKDD21}. Since the winning probability is a CDF, it is non-decreasing by definition. In this case $\vx$ consists of only one feature - the bid $b$, for which we care for the model's shape. The remaining features of the auction go into $\vy$, which is fed into an auction encoder that serves as a hyper-network. This network, in turn, predicts $\mA, \mB$ with a positive definite $\mB$ that are the weights of the model $\lambda_k(\mA + b \cdot \mB)$. This architecture is illustrated in \figref{fig:bid_hypernetwork}, and is a generalization of the architecture proposed in \citet{zhouKDD21}. Since $\vy$ can be arbitrary, it can even be a natural language description of the auction opportunity and the hyper-network can be a transformer-based encoder model \citep{devlin2019bert,liu2019robertarobustlyoptimizedbert,wang2024textembeddingsweaklysupervisedcontrastive}. Since the predicted $\mB$ is positive semi-definite, the model comprises a CDF by construction.

\begin{figure}[htbp]
    \centering
    \includegraphics[width=0.75\textwidth]{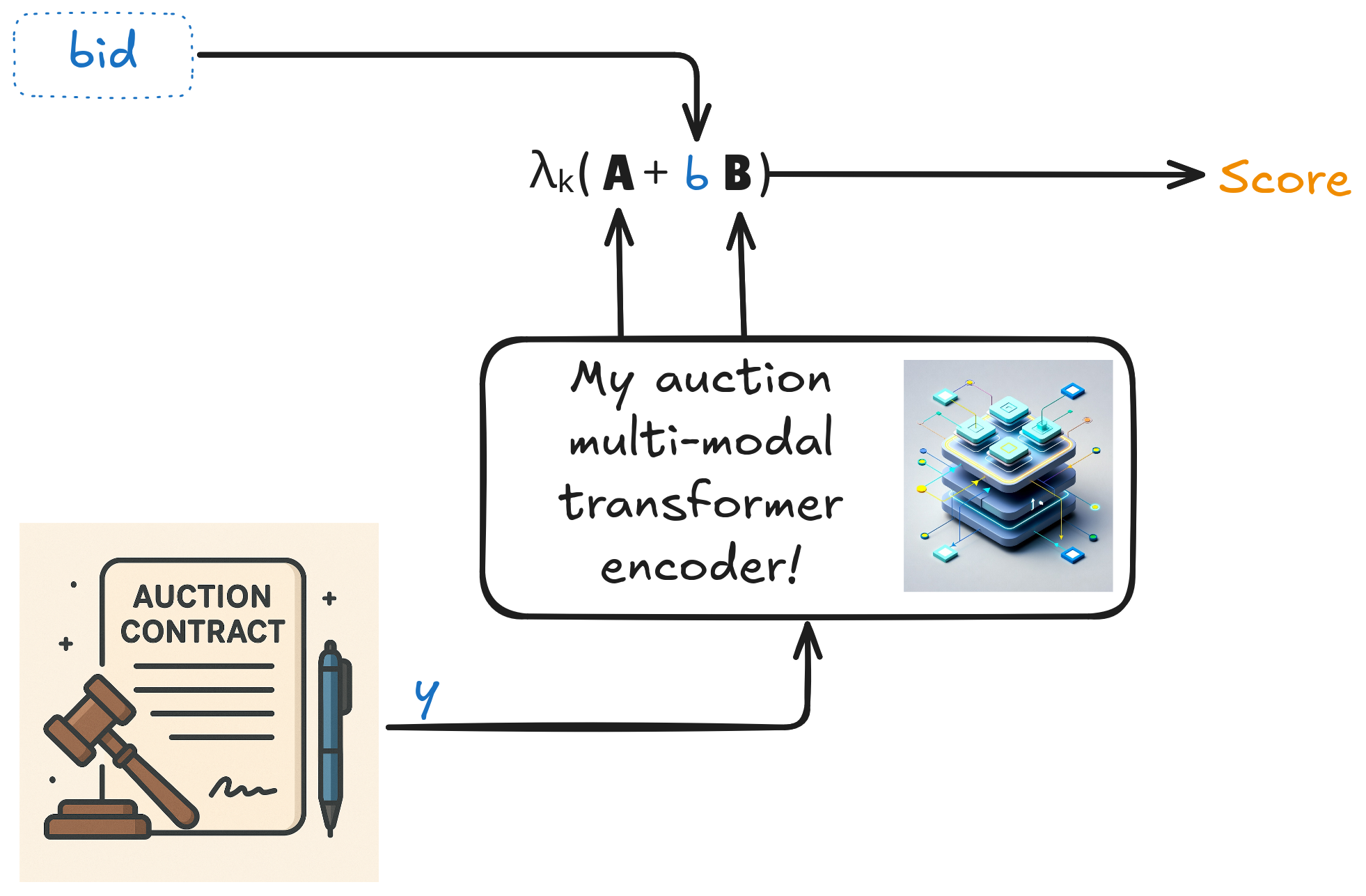}
    \caption{An auction winning probability (logit scale) prediction model using a hyper-network architecture. The auction contract $\vy$ is encoded into the symmetric matrices $\mA, \mB$ with $\mB$ being positive semi-definite. These, in turn, are parameters of an eigenvalue model $\lambda_k(\mA + b \cdot \mB)$ that predicts the winning probability on a logit scale.}
    \label{fig:bid_hypernetwork}
\end{figure}

\section{Training and initialization}
To train such models, beyond gradient computation that we typically do not have to manually do due to the abundance of autograd frameworks \citep{ansel2024pytorch}, we need a reasonable way to represent and initialize the matrices $\mA_0, \ldots, \mA_n$ of the spectral neuron. In this section we describe one such proposal. We do not aim for an exhaustive research into various techniques, but to propose \emph{one} which is guided by theory and worked well in our experiments, so we have good reasons to believe it to be a reasonable default starting point.

\subsection{Representation}
Dense symmetric matrices of size $d$ can be represented by a vector of their $D = \frac{1}{2} d (d + 1)$ upper (or lower) triangular elements. We propose a modification - parametrize a dense symmetric matrix by a vector $\vv \in \sR^D$ in the following manner:
\[
\sym(\vv) = \begin{pmatrix}
    \evv_1 & \frac{1}{\sqrt{2}} \evv_1 & \frac{1}{\sqrt{2}} \evv_2 & \frac{1}{\sqrt{2}} \evv_3 & \cdots & \frac{1}{\sqrt{2}} \evv_d \\
    \, & \evv_{d+1} & \frac{1}{\sqrt{2}} \evv_{d+2} & \cdots & & \frac{1}{\sqrt{2}} \evv_{2d-1} \\
    \, & \, & \evv_{2d} & \cdots & & \frac{1}{\sqrt{2}} \evv_{3d-3} \\
    \multicolumn{3}{c}{\text{symmetric}} & \ddots & \, & \vdots  \\
    \, & \, & \, & \, & \, & \evv_{d(d+1)/2}
\end{pmatrix}
\]
Namely, we embed the vector's entries into the matrix, taking the elements as is for the diagonal, but multiplying all off-diagonal elements by $\frac{1}{\sqrt{2}}$.

To understand why we chose this heuristic, consider the following $2 \times 2$ matrix:
\[
\begin{pmatrix}
    a & b \\ b & c
\end{pmatrix}.
\]
Its squared Frobenius norm is $a^2 + 2 b^2 + c^2$. However, if we were to parametrize it by a vector $(a, b, c)$, its squared Euclidean norm is $a^2 + b^2 + c^2$. Now, consider the matrix
\[
\sym(\vv) = \begin{pmatrix}
    \evv_1 & \frac{1}{\sqrt{2}} \evv_2 \\
    \frac{1} {\sqrt{2}} \evv_2 & \evv_3
\end{pmatrix},
\]
parametrized by $\vv = (\evv_1, \evv_2, \evv_3)$. Now, the Frobenius norm of the matrix $\sym(\vv)$, and the Euclidean norm of $\vv$ coincide. This is true in general - adding a $\frac{1}{\sqrt{2}}$ coefficient to the off-diagonal entries of the parametrized matrix makes the Euclidean norm of the vector $\vv$ and the Frobenius norm of $\sym(\vv)$ coincide. The reason we want to do it is avoiding an \emph{accidental} basis dependent preconditioner, namely, we want an optimizer making updates to $\vv$ to make similarly-scaled updates to the corresponding parametrized matrix $\sym(\vv)$. We do not claim this is a good or a bad preconditioner, we just want to avoid introducing one by mere coincidence.

In our experiments we also model non-decreasing functions. For simplicity, in our experiments we learn functions that are monotone increasing in only \emph{one} element feature vector $\vx$, which is by convention the last one - $\evx_n$. This may model, for example, the (logit of) the probability of winning an auction given the bid $\evx_n$, and additional features $\evx_1, \ldots, \evx_{n-1}$ that describe the auction itself. To enfore the shape constraint in the model, we require the learned matrix $\mA_n$ to be positive-semidefinite. 

There are many ways to parametrize positive-semidefinite matrices, but here we have an extreme simplification opportunity due to the orthogonal-invariance property described in \secref{sec:orth_invariance} - we may choose \emph{one} of the matrices $\mA_0, \ldots, \mA_n$ to be diagonal, and for such models we choose $\mA_n$ for that role.  A diagonal matrix is positive-semidefinite if and only if its diagonal entries are non-negative. To that end, we use the following smooth approximation of the \texttt{ReLU} function:
\[
\operatorname{squareplus}(x) = \frac{1}{2} \left(x + \sqrt{1 + x^2} \right).
\]
Given a vector $\vv \in \sR^d$, we parametrize a diagonal positive-semidefinite matrix as
\[
\operatorname{diag}(\operatorname{squareplus}(\vv)),
\]
where $\operatorname{squareplus}(\cdot)$ is applied component-wise. We could, in theory, choose any smooth \texttt{ReLU} approximation, such as the $x \to \ln(1+\exp(x))$. We chose $\operatorname{squareplus}(\cdot)$ since its gradients decay slower towards zero, which makes training possible even when $\operatorname{squareplus}(x)$ is very close to zero \citep{barron2021squareplus}. See \figref{fig:squareplus_deriv}. We leave devising a good parametrization for a larger number of semidefinite matrices that works well in practice the subject of future work.

\begin{figure}[htbp]
    \centering
    \includegraphics[width=0.5\linewidth]{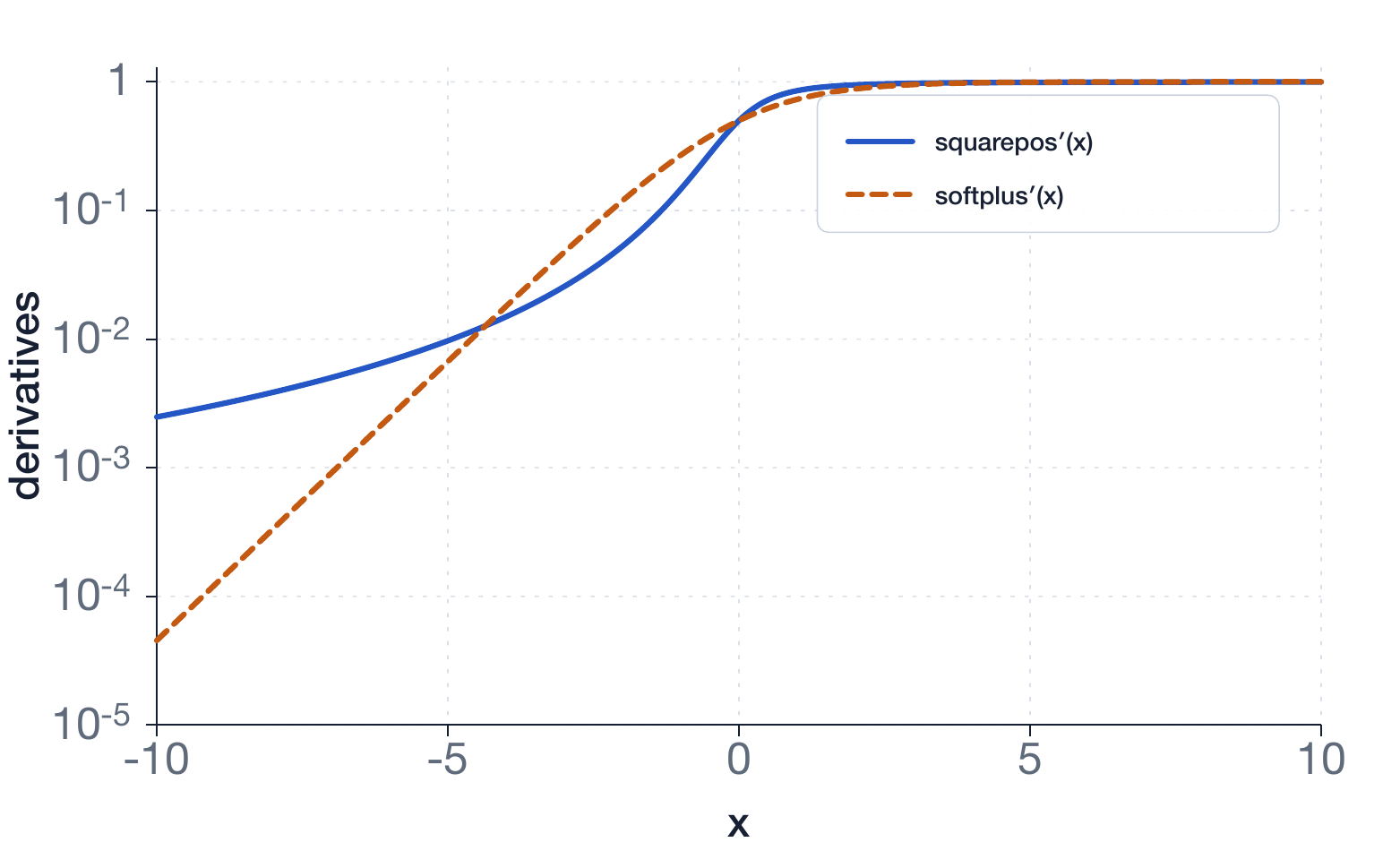}
    \caption{Derivatives of $\operatorname{squareplus}(x) = \frac{1}{2} \left(x + \sqrt{1 + x^2} \right)$ and $\operatorname{softplus}(x) = \ln(1+\exp(x))$}
    \label{fig:squareplus_deriv}
\end{figure}

To summarize, we parametrize an by the vectors $\vv_0, \ldots, \vv_n$ such that $\mA_i = \sym(\vv_i)$. When the model has one positive semi-definite matrix $\mA_n$, we parametrize $\mA_i = \sym(\vv_i)$ for $1 \leq i < n$, and $\mA_n$ is diagonal with $\mA_n = \operatorname{diag}(\operatorname{squareplus}(\vv_n))$. In both cases, the learned parameters are the vectors $\vv_0, \ldots, \vv_n$.

\subsection{Initialization}
Uniformly random or normal initialization, with an appropriate scale, could be a reasonable strategy, but here we propose a more carefully crafted idea that is inspired by our understanding of the training dynamics. Our objective is, of course, not studying initialization schemes, but rather propose \emph{one} that works reasonably well and can serve as a good default. 

Throughout this section we present two aspects of training dynamics that lead us to our desired initialization heuristic. However, it's important to note that despite being driven by theory, it is just a heuristic that we found to work well.

\subsubsection{The simultaneous-diagonalization trap}
Suppose that the matrices $\mA_0, \ldots, \mA_n$ are \emph{simultaneously diagonalizable}, meaning $\mA_i = \mQ \operatorname{diag}(\vlambda_i) \mQ^T$ for some orthogonal matrix $\mQ$ vectors $\vlambda_0, \ldots, \vlambda_n$. Consequently,
\[
    \lambda_k(\mathcal{A}(\vx)) = \lambda_k \left(\mQ \operatorname{diag}\left(\sum_{i=0}^n x_i \vlambda_i\right) \mQ^T\right) = \left(\sum_{i=0}^n x_i \vlambda_i\right)_{[k]}.
\]
In other words, the model predicts the piece-wise linear function corresponding to the $k$-th smallest entry of the vector $\sum_{i=0}^n x_i \vlambda_i$. By the chain rule applied to \eqref{eq:eigenvalue_gradient}, a sub-gradient w.r.t each $\mA_i$ is $\evx_i \vq \vq^T$, where $\vq$ is the corresponding column of the eigenvector matrix $\mQ$. 

Consequently, any optimization method that updated parameters as a linear combination of past gradients, including some methods used in modern machine learning, will always keep the matrices simultaneously diagonalizable with the same eigenvectors. Consequently, if at any time point of training a set of simultaneous diagonalizable matrices is obtained, they will remain that way from that point onward. Thus, to give our model a chance to learn functions that are not piece-wise linear, we should \emph{avoid simultaneous diagonalizability at initialization}.

\subsubsection{The eigenvalue gap effect}
A central theme in first-order optimization is that tighter bounds on deterministic gradient variation—expressed, for example, through Lipschitz or relative smoothness, and on stochastic -gradient variability, yield correspondingly stronger convergence guarantees \citep{bottou2018largescaleoptim}. Hence, our guiding principle is devising an initialization scheme that aims for low sensitivity of derivatives of $\lambda_k(\mA)$ to changes in $\mA$.

As shown in \secref{sec:derivatives}, matrices of the form $\vv \vv^T$, where $\vv$ is an eigenvalue of $\mA$ are Clarke subdifferentials of $\lambda_k(\mA)$, and in particular, this is the gradient whenever the eigenvalue is simple. A famous result on the sensitivity to change is the Davis Kahan theorem. To present it, we first need to define the notion of the \emph{eigengap}, which is the closest distance to nearby eigenvalues.
\begin{definition}[Eigengap]
    Let $\mA \in \symd$, and let $\alpha = \lambda_k(\mA)$ be an eigenvalue with multiplicity $m$, meaning, for some $r < s$ with $m = s - r + 1$
    \[
        \alpha = \lambda_r(\mA) = \lambda_{r + 1}(\mA) = \ldots = \lambda_s(\mA).
    \]
    Define $\lambda_0(\mA) = -\infty$ and $\lambda_{d+1}(\mA) = +\infty$, define the $k$-th eigengap as
    \[
    \operatorname{\gamma}_k(\mA) = \min(\alpha - \lambda_{r-1}(\mA), \lambda_{s+1} - \alpha).
    \]
\end{definition}
Below is one well-known variant of the Davis-Kahan theorem that gives us the necessary insight.
\begin{theorem}[Davis-Kahan \citep{yu2015daviskahan}]
Let $\mA, \mB \in \symd$, and assume $\lambda_r(\mA) = \ldots = \lambda_s(\mA)$ is an eigenvalue with multiplicity $m = s - r + 1$. Let $\mU \in \sR^{d \times m}$ be the matrix whose columns are orthonormal eigenvectors of $\alpha$, and let $\mV \in \sR^{d \times m}$ be the matrix whose columns are the orthonormal eigenvectors of $\lambda_r(\mB), \ldots, \lambda_s(\mB)$. Then for any $r \leq k \leq s$ we have
\[
\| \mU \mU^T - \mV \mV^T \|_2 \leq \frac{2 \| \mA - \mB \|_2}{\gamma_k(\mA)}.
\]
In particular, when $m = 1$, the matrices $\mU, \mV$ are column vectors of corresponding eigenvectors $\vu, \vv$, and 
\begin{equation}\label{eq:davis_kahan_simple}
\| \vu \vu^T - \vv \vv^T \|_2 \leq \frac{2 \| \mA - \mB \|_2}{\gamma_k(\mA)}.
\end{equation}
\end{theorem}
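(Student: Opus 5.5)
The plan is to prove this via a Sylvester-equation argument in the style of the Davis--Kahan $\sin\Theta$ theorem. Write $\mE = \mB - \mA$, $\varepsilon = \|\mE\|_2$, $\gamma = \gamma_k(\mA)$, $\mP = \mU\mU^T$ and $\mQ = \mV\mV^T$.

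First I dispose of the large-perturbation case. Both $\mP$ and $\mQ$ are orthogonal projectors, so $\|\mP - \mQ\|_2 \leq 1$. Hence if $\varepsilon \geq \gamma/2$, the claimed bound $2\varepsilon/\gamma \geq 1$ holds trivially, and I may assume $\varepsilon < \gamma/2$ from now on.

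Next I reduce the projector distance to a single off-diagonal block. Let $\mU_\perp \in \sR^{d\times(d-m)}$ collect orthonormal eigenvectors of $\mA$ for the remaining eigenvalues, with $\mA\mU_\perp = \mU_\perp \bm{\Lambda}_\perp$ and $\bm{\Lambda}_\perp$ diagonal. The two projectors have the same rank $m$. By the standard CS-decomposition fact for equal-rank projectors,
\[
\|\mP - \mQ\|_2 = \|(\mI-\mP)\mQ\|_2 = \|\mU_\perp^T \mV\|_2 .
\]
I will cite this fact rather than reprove it. So it suffices to bound $\mX := \mU_\perp^T\mV$.

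The core step is the Sylvester equation. Write $\mB\mV = \mV\mM$ with $\mM = \operatorname{diag}(\lambda_r(\mB),\ldots,\lambda_s(\mB))$. Multiplying on the left by $\mU_\perp^T$ and using $\mU_\perp^T\mB = \mU_\perp^T\mA + \mU_\perp^T\mE = \bm{\Lambda}_\perp\mU_\perp^T + \mU_\perp^T\mE$ gives
\[
\bm{\Lambda}_\perp \mX - \mX \mM = -\mU_\perp^T \mE \mV .
\]
Now shift both sides by $\alpha$, setting $\bm{\Lambda}' = \bm{\Lambda}_\perp - \alpha\mI$ and $\mM' = \mM - \alpha\mI$. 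This leaves the equation unchanged in form: $\bm{\Lambda}'\mX = \mX\mM' - \mU_\perp^T\mE\mV$.

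Two spectral facts control this equation. By the definition of the eigengap, every diagonal entry of $\bm{\Lambda}'$ has modulus at least $\gamma$, so $\|\bm{\Lambda}'^{-1}\|_2 \leq 1/\gamma$. By Weyl's inequality $|\lambda_j(\mB) - \lambda_j(\mA)| \leq \varepsilon$, quoted in \secref{sec:continuity}, we get $\|\mM'\|_2 \leq \varepsilon$. Solving for $\mX$ and applying these two bounds,
\[
\|\mX\|_2 \leq \frac{1}{\gamma}\left(\varepsilon\|\mX\|_2 + \varepsilon\right),
\qquad\text{hence}\qquad
\|\mX\|_2 \leq \frac{\varepsilon}{\gamma - \varepsilon} \leq \frac{2\varepsilon}{\gamma},
\]
where the last step uses $\varepsilon < \gamma/2$. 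The case $m=1$, i.e.\ \eqref{eq:davis_kahan_simple}, is then immediate.

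I expect the main obstacle to be the projector identity $\|\mP-\mQ\|_2 = \|\mU_\perp^T\mV\|_2$. If I want a self-contained argument, I would verify it by writing $\mP - \mQ = (\mI-\mQ)\mP - \mQ(\mI-\mP)$. The two terms have orthogonal ranges and orthogonal row spaces, so $\|\mP-\mQ\|_2 = \max\{\|(\mI-\mQ)\mP\|_2, \|\mQ(\mI-\mP)\|_2\}$. For equal-rank projectors these two quantities coincide; this is the step where the CS decomposition is really used.

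A second point needs care. When $\lambda_r(\mB),\ldots,\lambda_s(\mB)$ are not distinct, or when they coincide with neighbouring eigenvalues of $\mB$, the matrix $\mV$ is not unique. The argument above only uses $\mB\mV = \mV\mM$ and $\mV^T\mV = \mI$, so any valid choice of $\mV$ works.
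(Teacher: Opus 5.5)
Your proof is correct. The paper gives no argument of its own and only cites \citet{yu2015daviskahan}, so the useful comparison is with the proof there.

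Both proofs rest on the identity you derive, $\bm{\Lambda}'\bm{X} = \bm{X}\bm{M}' - \mU_\perp^T\mE\mV$. This is $\mU_\perp^T$ applied to the residual $(\mA-\alpha\mI)\mV = \mV\bm{M}' - \mE\mV$. Transcribed to this repeated-eigenvalue case and to the spectral norm, Yu et al.\ bound the eigenvalue-mismatch term before projecting: $\|\bm{X}\bm{M}'\|_2 \le \|\bm{M}'\|_2 \le \varepsilon$, using Weyl and $\|\bm{X}\|_2 \le 1$. This gives $\gamma\|\bm{X}\|_2 \le 2\varepsilon$ in one line for every $\varepsilon$, with no case split. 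Their factor $2$ is ``perturbation plus Weyl'', and only a population gap appears.

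You instead keep $\|\bm{X}\|_2$ on the right and absorb it, which gives $\varepsilon/(\gamma-\varepsilon)$. This is exactly what the original Davis--Kahan $\sin\Theta$ theorem yields when its mixed gap (between the cluster of $\mB$ and the complement of $\mA$) is estimated by Weyl. Your bound is sharper than $2\varepsilon/\gamma$ whenever $\varepsilon<\gamma/2$ and behaves like $\varepsilon/\gamma$ for small perturbations. It requires $\varepsilon<\gamma$, which is why you need the trivial large-perturbation case.

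Your centering at the single value $\alpha$ turns the Sylvester operator into left multiplication by the diagonal $\bm{\Lambda}'$. That step depends on the eigenvalue being repeated, exactly as in the paper's statement. Yu et al.\ allow a general cluster $\lambda_r(\mA),\ldots,\lambda_s(\mA)$, so they need a genuine Sylvester separation bound. They state their result as a Frobenius-norm $\sin\Theta$ bound with numerator $2\min(\sqrt{m}\,\|\mA-\mB\|_2,\|\mA-\mB\|_F)$. For $m>1$ this does not literally give the constant-$2$ spectral-norm inequality the paper writes, so your direct spectral-norm argument is what actually establishes the statement as written.
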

Although the Davis–Kahan theorem provides an upper bound, its inverse-eigengap dependence is sharp: examples attain the bound up to universal constants exist in \citet{yu2015daviskahan} and \citet{daviskahan1970rotation}. Thus, by \eqref{eq:davis_kahan_simple} we see that the eigengap $\gamma_k(\mA)$ plays a central role in the sensitivity of the gradient $\nabla \lambda_k(\mA)$ at points of differentiability to change - nearby matrices might have distant eigenvalue derivatives if $\gamma_k(\mA)$ is small. Thus, to give our model a chance to learn something meaningful quickly, we should aim to \emph{aviod small eigen-gaps at initialization}. 

\subsubsection{Initialization scheme}
Equipped with the two insights above, we would like an initialization scheme such that: (a) ensures eigenvalue gap at is controlled; and (b) the matrices are not simultaneously diagonalizable.

To that end, we initialize the matrix $\mA_0$ as 
\[
\mA_0
= \mQ^T \operatorname{diag}(
  -1, \ldots, -1,
  \underset{\mathclap{\substack{
    \uparrow\\[-0.9ex]\vert\\[-0.9ex]\vert\\[-0.2ex]\text{position }k
  }}}{0},
  1, \ldots, 1
)\mQ.
\]
where $\mQ$ is a random orthogonal matrix obtained from the QR decomposition of a matrix whose entries are chosen from $\mathcal{N}(0, 1)$, and the zero in the middle diagonal matrix is exactly $k$ - the index of eigenvalue of our spectral neuron. The matrices $\mA_1, \ldots, \mA_n$ are initialized as
\[
    \mA_i = \alpha_i \mI + \operatorname{diag}(\vepsilon_i),
\]
where $\alpha_i \sim \mathcal{U}(-\frac{1}{\sqrt{n}}, \frac{1}{\sqrt{n}})$, and each component $\evepsilon_{ij} \sim \mathcal{U}(-\frac{1}{20 n}, \frac{1}{20 n})$. When the feature vector is expected to be sparse and have at most $s$ entries (e.g., one-hot encoded columns of a tabular dataset), we use $s$ instead of $n$ in the above formulas.  The choice of the interval for $\alpha_i$ aligns with the widely used Kiaming uniform initialization scheme \citep{he2015dkiaminguniform}. The remaining choices are explained below.

First, note that without the jitter  $\operatorname{diag}(\vepsilon_i)$, letting $\va(\vx) = \sum_{i=1}^n \alpha_i \evx_i$, the eigenvalues of $\mathcal{A}(\vx)$ at initialization would be:
\begin{align*}
    \lambda_1(\mathcal{A}(\vx)) &= -1 + \va(\vx) \\
    \mathrlap{\vdots} \\
    \lambda_{k-1}(\mathcal{A}(\vx)) &= -1 + \va(\vx) \\
    \lambda_k(\mathcal{A}(\vx)) &= \va(\vx) \\
    \lambda_{k + 1}(\mathcal{A}(\vx)) &= 1 + \va(\vx) \\
    \mathrlap{\vdots} \\
    \lambda_{d}(\mathcal{A}(\vx)) &= 1 + \va(\vx)
\end{align*}
Thus, at initialization the eigenvalue $\lambda_k$ is well-separated from its neighbors. However, without the jitter all initial feature matrices are multiples of the identity matrix, and diagonalized by any orthogonal matrix $\mQ$. This contradicts our desire for the matrices \emph{not} to be simultaneously diagonalizable. The addition of the jitter term ensures that almost surely $\mA_0 \mA_i \neq \mA_i \mA_0$. Since matrices are simultaneously diagonalizable if and only if they commute pairwise, this indeed achieves our desired goal. 

The reason for the apparently peculiar interval from which $\vepsilon_i$ is chosen due to keep the eigenvalue separation property, as shown in the following Lemma.
\begin{lemma}\label{lem:eigengap}
    Let 
    \[
        \mA_0 = \mQ \operatorname{diag}(-1, \ldots, -1, 0, 1, \ldots, 1) \mQ^T,
    \]
    and for $1 \leq i \leq n$ let
    \[
        \mA_i = \alpha_i \mI + \operatorname{diag}(\varepsilon_{i1}, \ldots, \varepsilon_{in}),
    \]
    with $\alpha_i$ arbitrary, and $|\evepsilon_{ij}| \leq \frac{1}{4Rn}$. 
    
    Suppose that $\|\vx\|_\infty \leq R$. Then,
    \[
        \gamma_k(\mathcal{A}(\vx)) \geq \frac{1}{2}.
    \]
\end{lemma}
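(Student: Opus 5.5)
Write $\mathcal{A}(\vx) = \mA_0 + a(\vx)\mI + \mE(\vx)$, where $a(\vx) = \sum_{i=1}^n \alpha_i x_i$ and $\mE(\vx) = \sum_{i=1}^n x_i \operatorname{diag}(\vepsilon_i)$. The plan is to treat $\mA_0 + a(\vx)\mI$ as an unperturbed matrix whose spectrum is known exactly. The term $\mE(\vx)$ is then a small symmetric perturbation, controlled by the Weyl-type bound $|\lambda_j(\mA) - \lambda_j(\mB)| \le \|\mA - \mB\|_2$ quoted in \secref{sec:continuity}.

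\textbf{Step 1 (unperturbed spectrum).} Adding a multiple of the identity shifts every eigenvalue by the same amount and leaves all gaps unchanged. So $\mA_0 + a(\vx)\mI$ has eigenvalues $-1 + a(\vx)$ with multiplicity $k-1$, then $a(\vx)$ simple in position $k$, then $1 + a(\vx)$ with multiplicity $d-k$. In particular $\lambda_{k-1} = a - 1$, $\lambda_k = a$ and $\lambda_{k+1} = a + 1$, using the conventions $\lambda_0 = -\infty$ and $\lambda_{d+1} = +\infty$ at the edges.

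\textbf{Step 2 (size of the perturbation).} $\mE(\vx)$ is diagonal, so $\|\mE(\vx)\|_2 = \max_j \bigl|\sum_i x_i \varepsilon_{ij}\bigr| \le \sum_{i=1}^n |x_i|\,|\varepsilon_{ij}| \le n \cdot R \cdot \frac{1}{4Rn} = \frac14$. The jitter vectors are indexed by the matrix dimension; I read the ``$n$'' in $\operatorname{diag}(\varepsilon_{i1},\ldots,\varepsilon_{in})$ as $d$. The bound uses only the $n$ terms of the feature sum, so this reading is harmless.

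\textbf{Step 3 (eigenvalue locations and the gap).} By Weyl, each eigenvalue of $\mathcal{A}(\vx)$ moves by at most $\frac14$ from its unperturbed position. Hence:
\begin{itemize}
\item $\lambda_j(\mathcal{A}(\vx)) \le a - \frac34$ for $j \le k-1$;
\item $\lambda_k(\mathcal{A}(\vx)) \in [a - \frac14, a + \frac14]$;
\item $\lambda_j(\mathcal{A}(\vx)) \ge a + \frac34$ for $j \ge k+1$.
\end{itemize}
The $k$-th eigenvalue is therefore strictly separated from the others and is simple, so in the definition of $\gamma_k$ we have $r = s = k$. Then $\lambda_k - \lambda_{k-1} \ge (a - \frac14) - (a - \frac34) = \frac12$, and symmetrically $\lambda_{k+1} - \lambda_k \ge \frac12$. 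This gives $\gamma_k(\mathcal{A}(\vx)) \ge \frac12$.

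\textbf{Main obstacle.} The only delicate point is the multiplicity bookkeeping in the eigengap definition. One must first show that $\lambda_k$ is simple, so that $r = s = k$, before reading off the neighbours $\lambda_{k \pm 1}$. The strict separation in Step 3 settles this, because each interval is at distance at least $\frac12$ from the adjacent one. Everything else is a direct combination of Weyl's inequality with the diagonal norm bound.
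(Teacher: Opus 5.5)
Your proof is correct and follows the paper's argument: you split off $\mD(\vx)=\mA_0+a(\vx)\mI$, whose $k$-th gap is $1$, bound the diagonal jitter by $\|\mE(\vx)\|_2\le\frac14$, and apply Weyl's inequality. You also check explicitly that $\lambda_k(\mathcal{A}(\vx))$ stays simple, so $r=s=k$ in the eigengap definition; the paper's one-line step ``$\gamma_k(\mD+\mE)\ge 1-2\|\mE\|_2$'' leaves this implicit.
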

Before proving the Lemma, let's first discuss its meaning. Given a bound $R$ on the magnitude of our features, we should initialize the entries of the jitter vector $\vepsilon$ uniformly at random from $[-\frac{1}{4Rn}, \frac{1}{4Rn}]$ to ensure an eigen-value gap of at least $\frac{1}{2}$. Our peculiar choice corresponds to $R = 5$, which is a \emph{heuristic choice} corresponding to an assumption that features are typically standardized and approximately normal. Indeed, for a standard normal variable $\rvz$ we have $\mathcal{P}(|\rvz| > 5) \approx 5.73 \times 10^{-7}$, and thus by union bound if $\vx \in \sR^n$ has standard normal entries then
\[
    \mathcal{P}(\|\vx\|_\infty \leq 5) \geq 1 - 5.73 \times 10^{-7} n,
\]
which is close to $1$ for numbers of features we can expect in tabular datasets in practice. Of course, this is just a heuristic, and devising an optimal initialization scheme, or testing several candidates, is out of the scope of this paper. Below is the elementary proof of the Lemma.
\begin{proof}
    By the Lemma's assumptions, we have 
    \[
        \mathcal{A}(\vx) = \underbrace{\mQ \operatorname{diag}(-1, \ldots, -1, 0, 1, \ldots, 1) \mQ^T + (\sum_{i=1}^n \alpha_i \evx_i) \mI}_{\mD(\vx)} + \underbrace{\sum_{i=1}^n \evx_i \operatorname{diag}(\vepsilon_i)}_{\mE(\vx)}.
    \]
    Since adding a multiple of the identity does not change eigenvalues, we have $\gamma_k(\mD(\vx)) = 1$. The only source of change in the eigenvalues, and thus eigenvalue gaps, comes from $\mE(\vx)$. Since $\mE(\vx)$ is diagonal, 
    \[
        \|\mE(\vx)\|_2 = \max_j \left| \sum_{i=1}^n \evx_i \evepsilon_{ij} \right| \leq \max_j \sum_{i=1}^n |\evx_i| |\evepsilon_{ij}| \leq R n \max_{ij} |\evepsilon_{ij}| \leq R n \cdot \frac{1}{4Rn} = \frac{1}{4}.
    \]
    By Weyl's inequality,
    \[
        \gamma_k(\mD(\vx) + \mE(\vx)) \geq 1 - 2 \|\mE(\vx)\|_2 \geq 1 - 2 \times \frac{1}{4} = \frac{1}{2}.
    \]
\end{proof}

A slightly different initialization scheme is needed for positive semi-definite matrices, which we chose to parametrize as non-negative diagonal matrices. We choose a similar scheme to unconstrained matrices, but we choose $\alpha_n$ uniformly at random in $[\frac{1}{2\sqrt{\mathtt{fan\_in}}}, \frac{1}{\sqrt{\mathtt{fan\_in}}}]$. The parametrization vectors $\vv_0, \ldots, \vv_n$ are initialized accordingly to produce the desired random matrices under the $\operatorname{squareplus}$ transformation. Since $|\varepsilon_{ij}| \leq \frac{1}{20n} < \frac{1}{2\sqrt{n}}$ for $n > 1$, the initialization scheme indeed produces a positive semi-definite (and in fact, positive definite) matrix.

\subsection{Computational complexity}
Eigenvalue problems are generally solved in $O(n^3)$ time, similarly to matrix multiplication. However, the devil is in the details - eigenvalue problems are significantly more expensive to solve in practice. Dense matrix multiplication requires roughly $2 d^3$ floating point operations (FLOPS), however solving a symmetric eigenvalue problem requires significantly higher effort. Moreover, the computational effort at training is different than the one at inference: when training, we need eigenvectors compute gradients; for inference, we just need the $k$-th eigenvalue.

Typical symmetric eigenvalue solvers, such as the LAPACK SYEVD \citep{anderson1999lapack} solver employed by PyTorch \citep{ansel2024pytorch} operate in phases: the matrix is first reduced to tri-diagonal form using roughly $\frac{4}{3} d^3$ FLOPS, then a divide-and-conquer algorithm is employed to find eigenvalues \citep{lang2000direct} costing additional $O(d^2)$ FLOPS \citep{tisseur1999parallel}. To compute eigenvectors, we need additional floating point operations to solve linear systems, totalling roughly between $\frac{14}{3} d^3$ FLOPS.

Thus, comparing to a simple linear model with $n$ features, whose computational cost is $2n$ FLOPS, here way pay $nd^2 + \frac{14}{3} d^3$ FLOPS at training, and $nd^2 + \frac{4}{3} d^3$ FLOPS at inference (assuming the above LAPACK routines). We trade off speed for a combination of expressivity and coefficient transparency. Moreover, training is significantly more expensive than inference.

We could, of course, use dedicated algorithms that use the fact that we require just \emph{one} eigenvalue and \emph{one} corresponding eigenvector for training, rather than using an existing machine-learning framework. However, the basic ideas stand: we still require an expensive symmetric eigenvalue solver, both for training and inference. Writing such custom training and inference kernels is out of the scope of this work.

\section{Numerical experiments}
Our main thesis is that we have a family of models that is useful because it combines a unique set of properties: shape control by construction, coefficient transparency, and improvement by scaling. We do not claim state of the art performance, and provide baselines to qualitatively understand the performance of spectral neurons compared to other well-known models.

Since shape control is achieved by construction, numerical tests for shape control provide no additional value. The ability to improve with scaling is possible theoretically because of the universal approximation properties, but needs to be verified in practice - the fact that the model can represent our desired function does not mean we can learn this representation from data. Finally, spectral norms provide global feature-influence \emph{bounds}, but these upper bounds might be arbitrarily far from reality in practice. Thus, we also need to evaluate the gap between the upper bounds and actual prediction deviations when inputs are perturbed.

\subsection{Performance experiments}

We evaluate the performance of the model family both on synthetic data and on real datasets. The performance experiments are designed as \emph{scaling experiments} - we plot the best performance we could achieve for a given model family after performing a given number of optimizer steps. The objective is showing that the spectral neuron is able to learn from data in a practically useful manner, by observing improvement with model size and improvement with amount of data seen.

Synthetic data and real dataset experiments differ slightly, but have a common protocol. A stream of $N$ randomly selected mini-batches of predefined size provided to several model training procedures $T_{s, p}$ corresponding to model initialization seed $s \in \mathrm{seeds}$ and optimizer hyper-parameters $p$. For simplicity, and to save compute power, we use Adam \citep{kingma2017adam} as our optimizer in all experiments, and its learning rate as the only hyper-parameter.

At each checkpoint $1 \leq c_1 < c_2 \ldots < c_k = N$, defined by a certain amount of optimizer updates, we select the hyper-parameters corresponding to the best \emph{median} performance on a validation set, among all randomly selected seeds. Then, we use those hyper-parameters to train several models, each initialized from a different seed, and report plot 25\%, 50\%, and 75\% quantiles of their test set performance. This way, we simulate hyper-parameter tuning for each check-point, and measuring test performance of the hyper-parameters that perform best at that checkpoint.

Synthetic experiments contribute additional randomness explained below. In any case, the median for tuning, and the tes quantiles for plotting are taken over \emph{all} randomness.

All experiments use the mid-eigenvalue index $k = \frac{d}{2}$, and we use only odd values of $k$ to make the middle index unambiguous.

\subsection{Univariate functions}
In this experiment we fit two kinds synthetic random \emph{univariate} functions of varying complexity: arbitrarily shaped, and monotone functions. 

The arbitrarily shaped functions are formed as interpolating cubic B-Splines with not-a-knot end conditions, passing through points whose $x$ coordinates are the regular grid in $[-4, 4]$, and $y$ coordinates are chosen from the standard normal distribution. The monotone functions are chosen similarly, as an piecewise cubic Hermite interpolating polynomial (PCHIP) function, that interpolates points whose $x$ coordinates are the same regular grid, while the $y$ coordinates are chosen to be a non-decreasing vector in the following manner: samples from a log-normal distribution are taken, their cumulative sums are computed, and the resulting sequence of cumulative sums is standardized to have zero mean and a stanard deviation of 1. Both kind of functions are illustrated in Figure \figref{fig:univariate_targets}. At each complexity level, we use several random functions chosen from several seeds to neutralize bias towards a specific function.
\begin{figure}[htbp]
    \centering
    \begin{subfigure}[c]{.4\textwidth}
        \centering
        \includegraphics[width=\textwidth]{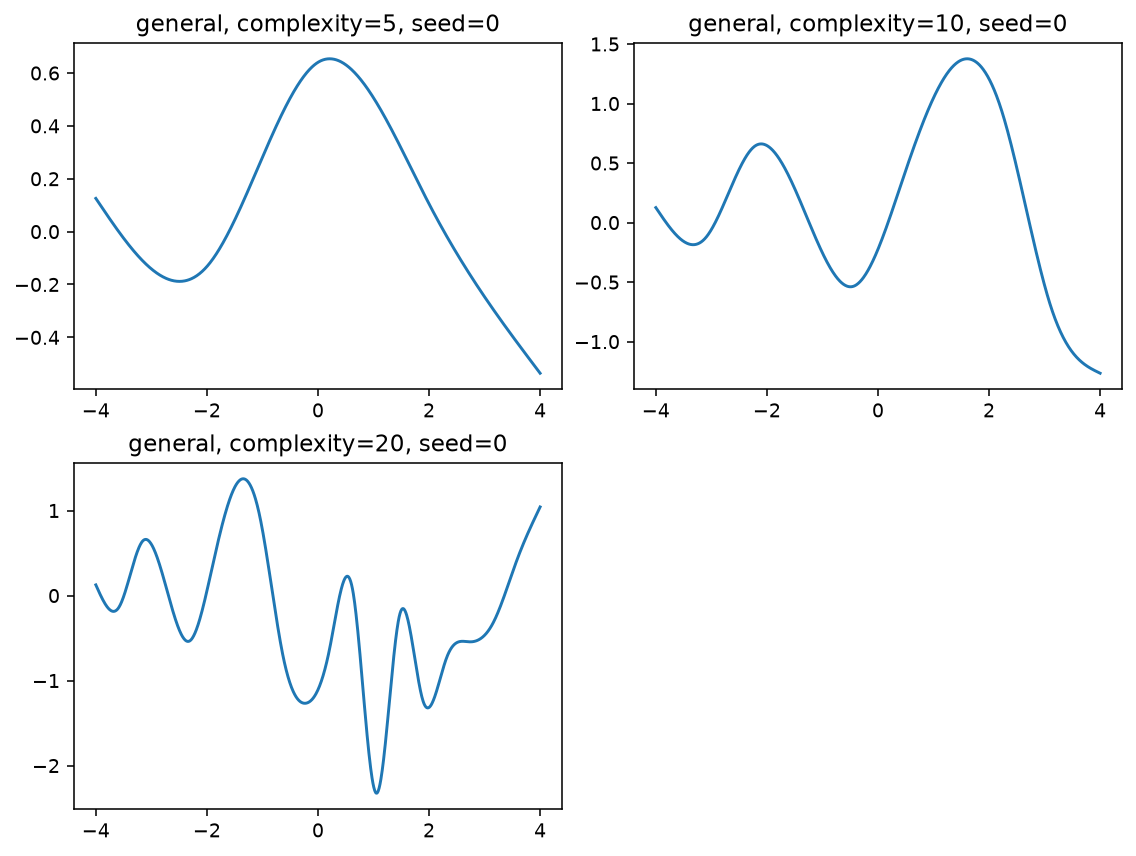}
        \caption{Arbitrary (general) target functions.}
    \end{subfigure}\qquad
    \begin{subfigure}[c]{.4\textwidth}
        \centering
        \includegraphics[width=\textwidth]{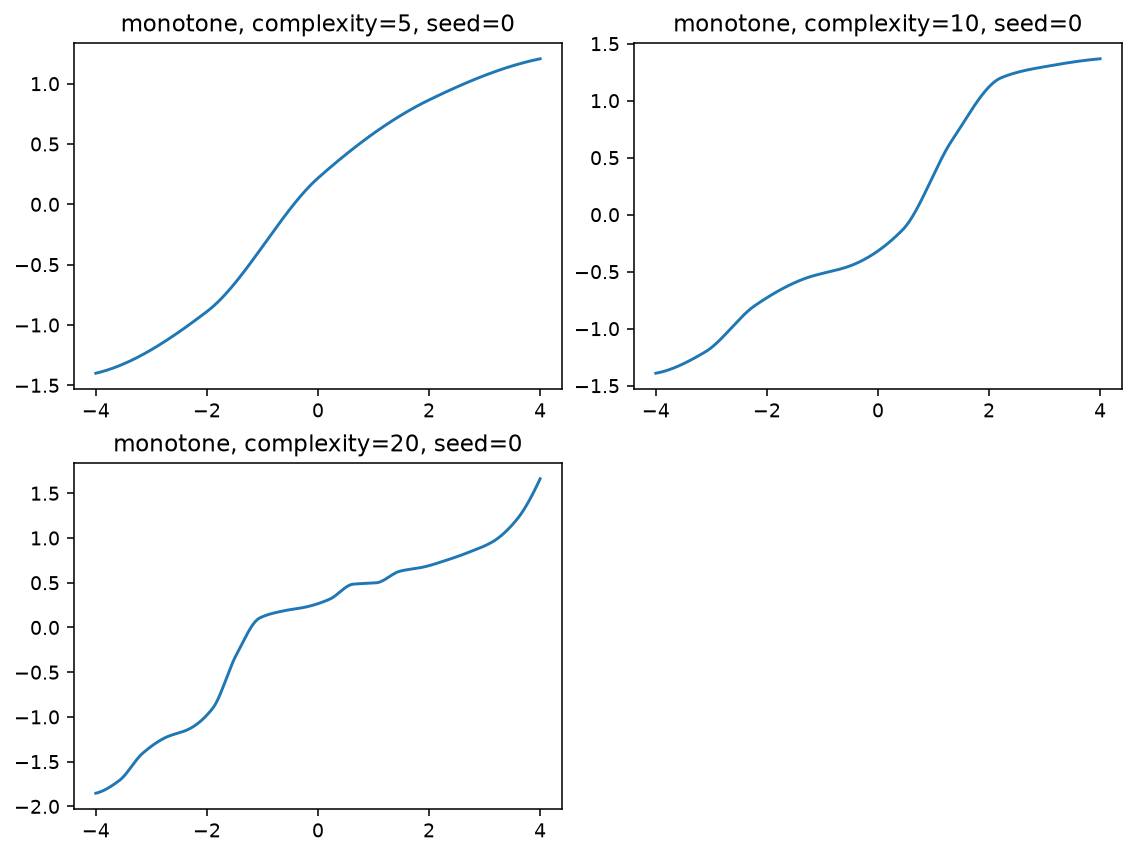}
        \caption{Monotone target functions.}
    \end{subfigure}
    \caption{Univariate fitting targets of varying complexity.}
    \label{fig:univariate_targets}
\end{figure}

Once the functions are determined, we have two variants of training procedures: noisless and noisy. In the noisless setting, the training data comprises of points of the form $(x, f(x))$ coordinates, where $f$ is the appropriate target function, and $x$ are chosen uniformly at random in $[-4, 4]$. In the noisy setting, the training data has points of the form $(x, f(x) + \varepsilon)$, where $\varepsilon \sim \mathcal{N}(0, 0.1)$. The validation set is randomly generated exactly like the training set, but the test set is always of the form $(x, f(x))$, where $x$ come from a regularly-spaced dense grid in $[-4, 4]$ of $N=10,000$ points. This choice stems from the fact that the validation set is a part of the training procedure, but the test set's job is to assess the ability to actually learn the target function. General functions are used to train unconstrained models, whereas monotone functions are used to train both unconstrained and monotone-by-design models.

Results of the scaling experiments are plotted in Figure \figref{fig:univariate_scaling_results}. We can observe several phenomena: (a) higher complexity functions require higher dimensional matrices to obtain a small error. Indeed, we can see this directly for general functions - the lower-dimensional model errors ``flatten out'' at some budget, while the higher-dimensional models keep improving. Consequently, the model's fitting power indeed improves with dimension; (b) monotonicity provides a useful inductive bias when the target is monotone. Indeed, at smaller budgets we already obtiain better performing models, especially with higher-complexity functions. Moreover, even low-dimensional monotone models continue improving with more training samples, and do not ``flatten out''.

\begin{figure}
    \centering
    \begin{subfigure}[c]{\textwidth}
        \includegraphics[width=\textwidth]{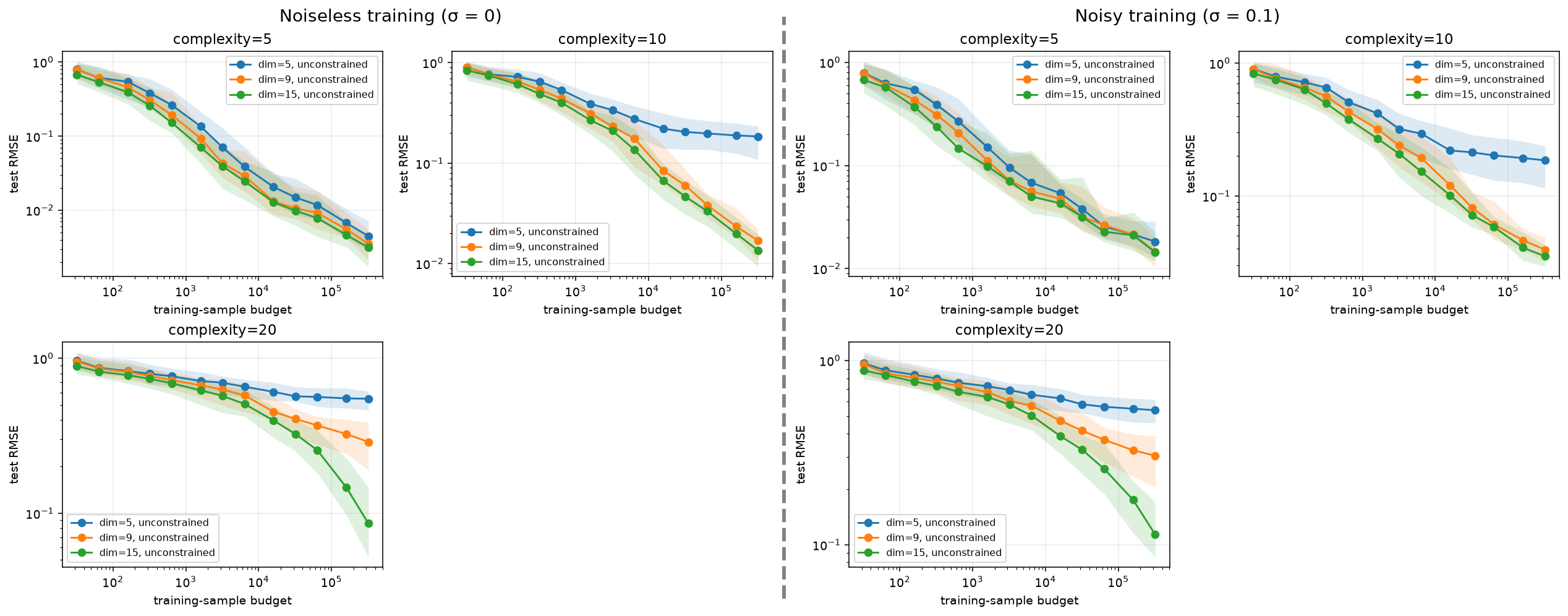}
    \end{subfigure}
    \rule{\textwidth}{0.4pt}
    \begin{subfigure}[c]{\textwidth}
        \includegraphics[width=\textwidth]{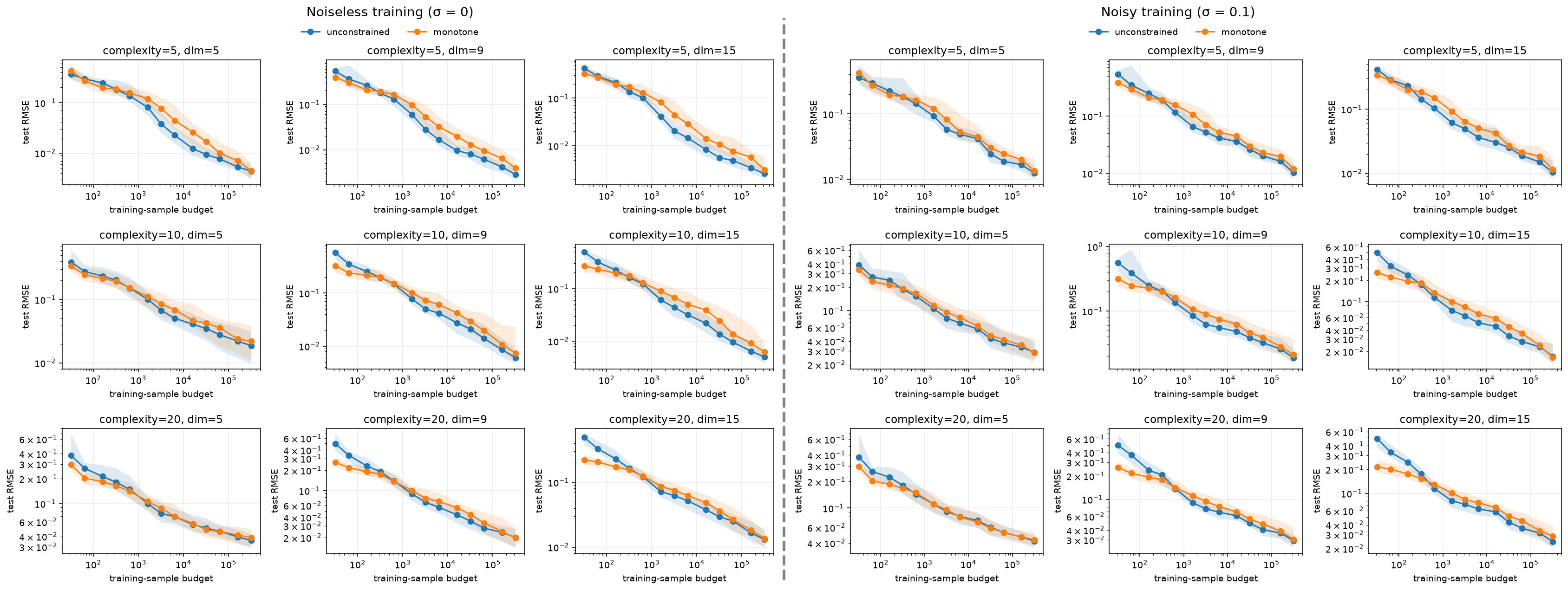}
    \end{subfigure}
    \caption{Univariate scaling results. $x$-axis - number of training samples. $y$-axis - test loss of the best configuration determined by the validation set. Top - generic targets, grouped by function complexity. Bottom - monotone targets, with complexity along the rows, and matrix dimensions along the columns. Left - noiseless labels. Right - noisy labels.}
    \label{fig:univariate_scaling_results}
\end{figure}

\subsection{Bivariate function}
This experiment's setup is identical to the univariate functions experiment, but the target functions are different. General functions $y = f(x_1, x_2)$ are obtained as an interpolating tensor-product cubic B-Spline on a uniform grid on $[-4, 4] \times [-4, 4]$, with $c \times c$ points along each dimension, where $c \in \sN$ the complexity. The $y$ coordinates at the interpolation points are chosen uniformly at random.

Monotone target functions are unconstrained in $x_1$ but monotone increasing in $x_2$. The construction is slightly more involved, since we are not aware of a simple interpolant preserving monotonicity only along one dimension. At the high level, we construct $m$ monotone $C^2$ functions $f_1(x_2), \ldots, f_m(x_2)$ using exactly the same process we obtained monotone univariate functions, to obtain monotone slices of uniformly spaced $x_1$ coordinates. Then, we blend the slices together using a $C^2$ function to obtain our target bivariate function. Formally, letting $-4 = \xi_1 < \xi_2 < \ldots < \xi_m = 4$ be uniformly-spaced grid points, to obtain $f(x_1, x_2)$ we determine the two grid points $\xi_i$ and $\xi_{i+1}$ such that either $x_1 \in [\xi_i, \xi_{i+1})$ for $i < m - 1$ or $x_1 \in [\xi_{m-1}, \xi_m]$ for $i = m - 1$, compute the relative distance from $\xi_i$ as $t = (x_1 - \xi_i) / (\xi_{i+1} - \xi_i)$, and blend $f(x_1, x_2) = (1 - w(t)) f_i(x_2) + w(t) f_{i+1}(x_2)$ with $w(t) = 6t^5 - 15 t^4 + 10 t^3$ being the unique lowest-degree polynomial satisfying the end-point conditions $w(0) = 0$, $w(1) = 1$, and $w'(0) = w'(1) = w''(0) = w''(1) = 0$. These endpoint conditions ensure the target function is $C^2$. The process is illustrated in Figure \ref{fig:slice_blending}, and examples of the resulting bivariate functions are shown in Figure \ref{fig:bivariate_targets}.
\begin{figure}[htbp]
    \centering
    \includegraphics[width=0.95\linewidth]{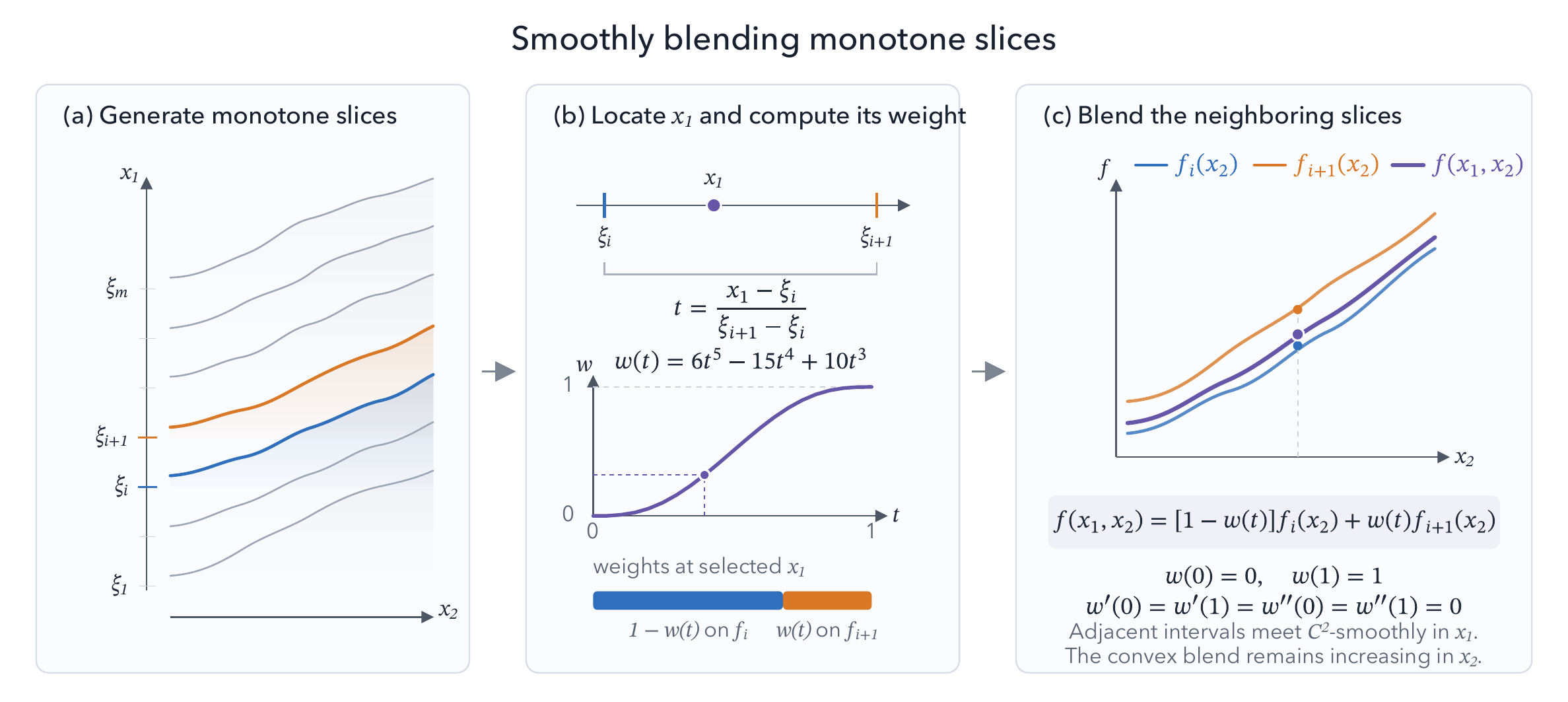}
    \caption{The process of obtaining a function $f(x_1, x_2)$ that is monotone in $x_2$ and unconstrained in $x_1$, by blending slices that are monotone in $x_2$.}
    \label{fig:slice_blending}
\end{figure}
\begin{figure}[htbp]
    \centering
    \begin{subfigure}[c]{.45\textwidth}
        \centering
        \includegraphics[width=\textwidth]{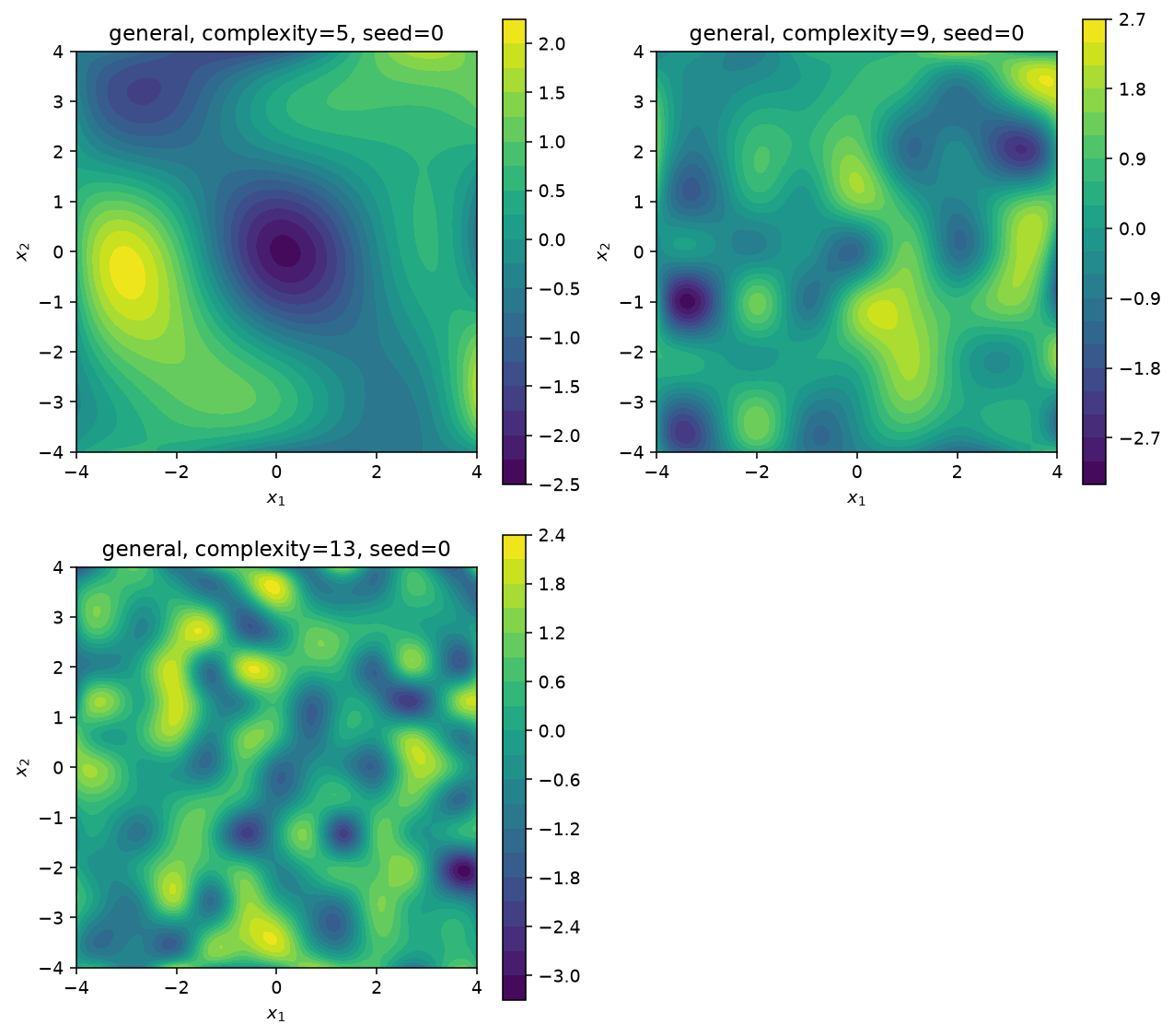}
        \caption{Arbitrary (general) target functions.}
    \end{subfigure}\qquad
    \begin{subfigure}[c]{.45\textwidth}
        \centering
        \includegraphics[width=\textwidth]{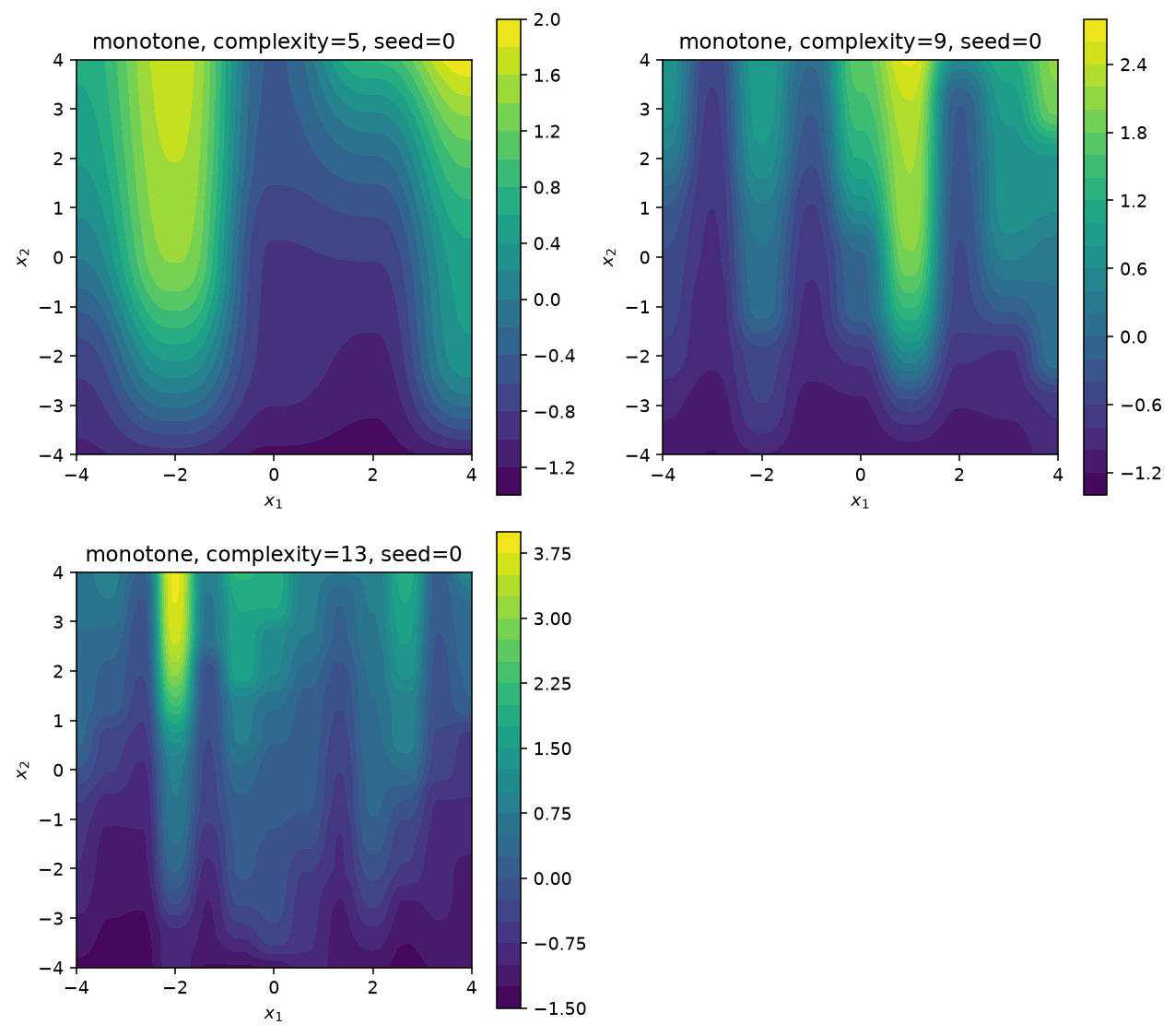}
        \caption{Monotone in $x_2$ target functions.}
    \end{subfigure}
    \caption{Bivariate fitting targets of varying complexity.}
    \label{fig:bivariate_targets}
\end{figure}

When fitting an unconstrained function, a general unconstrained spectral neuron is used. When fitting a monotone in $x_2$ function, both an unconstrained spectral neuron and a monotone in $x_2$ neuron with unconstrained $\mA_0$ and $\mA_1$ and a positive-semidefinite diagonal matrix $\mA_2$ are used. The fitting experiments are conducted using two identical noiseless and noisy protocols to the one described for univariate functions. The results are plotted in \figref{fig:bivariate_scaling_results}. Again, we can see that higher dimensional neurons are able to fit higher-complexity models better. Indeed, a function of complexity $m=13$ is fit quite well by spectral neurons of dimension 15, but the scaling curve of lower-dimensional neurons ``flattens out'' as we add more data. Moreover, for higher-complexity monotone targets are slightly easier to fit with a small amount of data - the inductive bias helps. But as the amount of data increases, unconstrained models and monotone models perform similarly. Of course, the monotone model is monotone by construction, which is important when such monotonicity plays a role in the correctness of a system (e.g., modeling the log-odds of a CDF). Thus, we obtain systems that are both correct by design, and trainable from data.

\begin{figure}[htbp]
    \centering
    \begin{subfigure}[c]{\textwidth}
        \includegraphics[width=\textwidth]{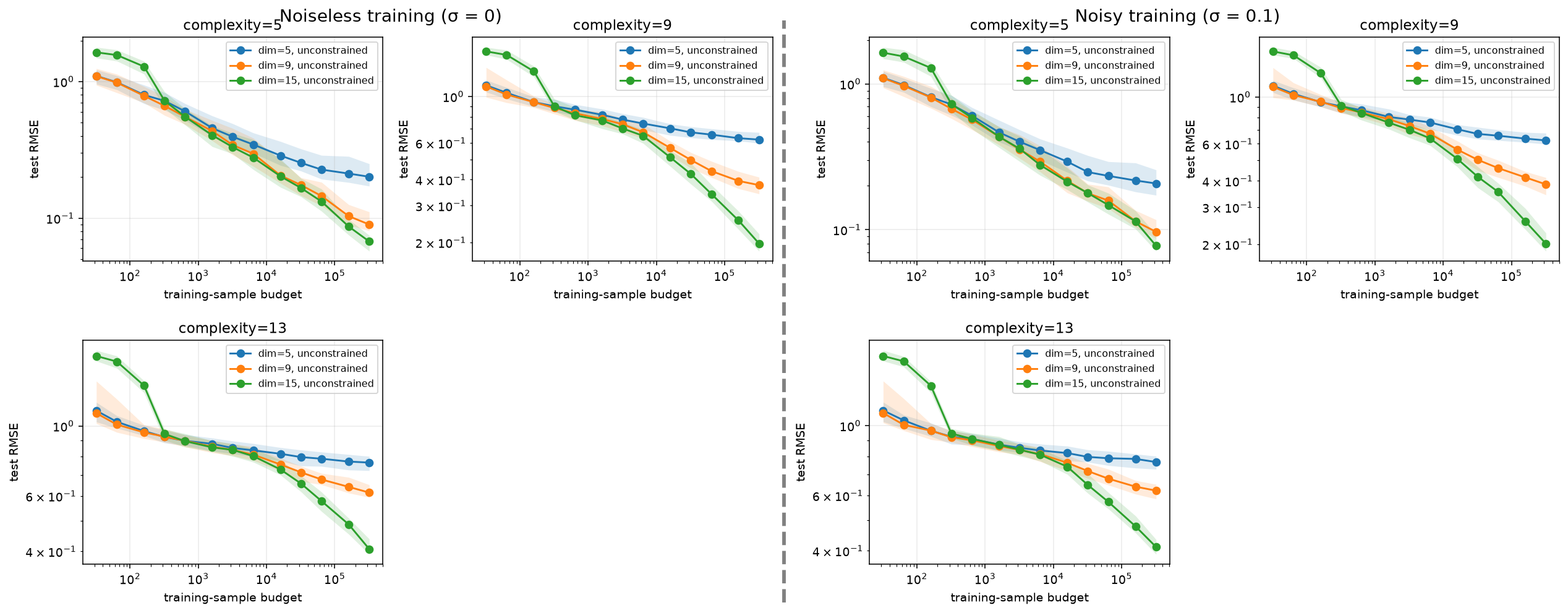}
    \end{subfigure}
    \rule{\textwidth}{0.5pt}
    \begin{subfigure}[c]{\textwidth}
        \includegraphics[width=\textwidth]{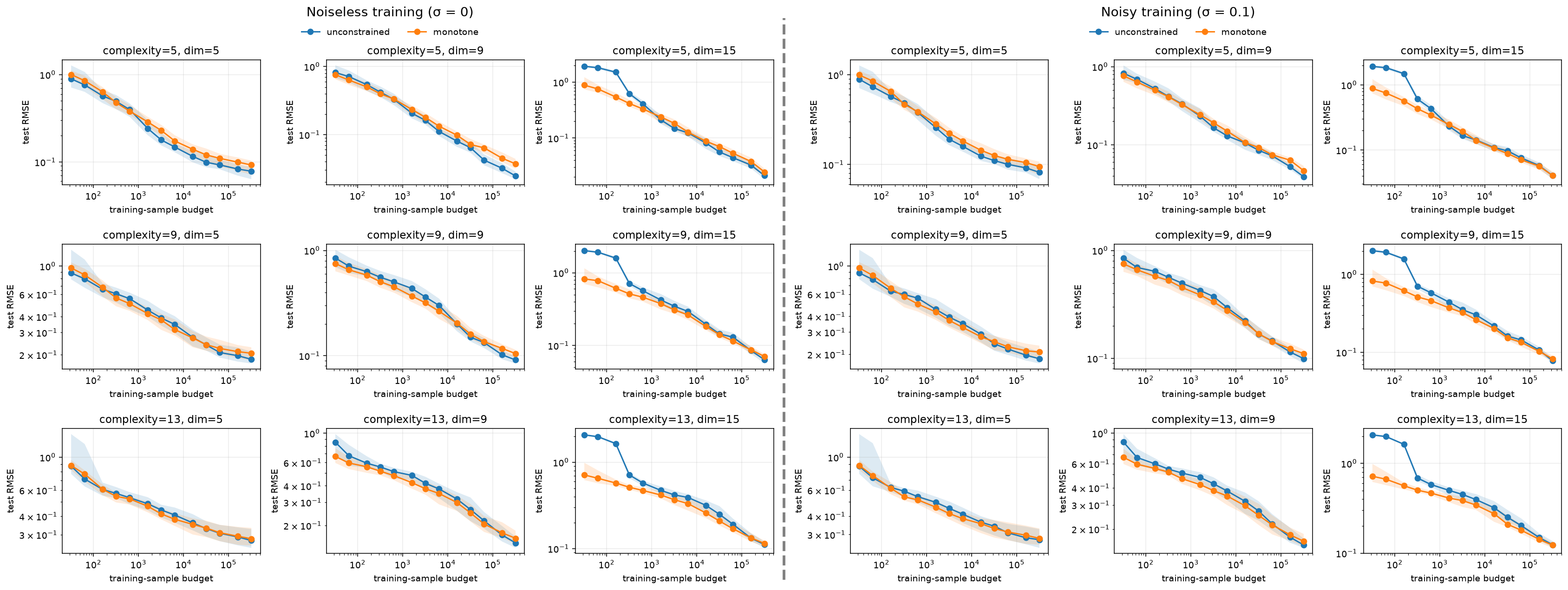}
    \end{subfigure}
    \caption{Bivariate scaling results. $x$-axis - number of training samples. $y$-axis - test loss of the best configuration determined by the validation set. Top - generic targets, grouped by function complexity. Bottom - monotone targets, with complexity along the rows, and matrix dimensions along the columns. Left - noiseless labels. Right - noisy labels.}
    \label{fig:bivariate_scaling_results}
\end{figure}

\subsection{Real datasets}
We perform similar scaling experiments on two real-world datasets that are large enough for such experiments: the Criteo Display Advertising Challenge dataset \citep{criteo}, as a representative of the recommender systems world, and the HIGGS dataset \citep{higgs}, as a representative of the natural sciences. A large dataset allows treating it as a ``distribution'', and sampling mini-batches, just like our synthetic experiment protocol. The sampling method is simple: we treat concatenated randomly shuffled copies of the data-set as an endless stream of samples, from which we take consecutive samples as mini-batches. The dataset sizes are summarized in \tblref{tbl:dataset_sizes}. 

The evaluation protocol is simpler than synthetic experiments. In contrast to synthetic functions, where we could build many random functions of a given complexity, we have only \emph{one} Criteo and \emph{one} HIGGS dataset. Thus, the only sources of randomness are the data-sampling random seeds, and the model initialization seeds.

\begin{table}[htbp]
    \caption{Scaling experiment tabular dataset sizes and column count.}
    \label{tbl:dataset_sizes}
    \centering
    \begin{tabular}{lllll}
        \toprule
        {} & Criteo & Higgs \\
        \midrule
        \# rows & 45,840,617 & 11,000,000 \\
        \# categorical columns & 26 & 0 \\
        \# numerical columns & 13 & 28 \\
        Metric type & Cross-Entropy & Cross-Entropy  \\
        \bottomrule
    \end{tabular}
\end{table}

Both datasets were divided into 80\% training data, 10\% validation data for hyperparameter tuning, and 10\% test data for performance testing of models trained with the tuned hyper-parameters. For the HIGGS dataset, the division into train, validation and test was done by random shuffling. For the Criteo dataset the division was done chronologically, as is customary in recommender systems: we want to train on older samples, but validate and test on newer samples.

HIGGS pre-processing consists of elementary standardization - we compute mean and standard deviation of each numerical column based on the training set. Criteo pre-processing was done in two ways as explained below. The first was was identical to the procedure employed by the winners of the Criteo display advertising challenge - missing values in a categorical column are replaced by a special \texttt{MISSING} categorical value, and rare values are replaced by a special \texttt{RARE} value. Then, the column is one-hot encoded. Numerical columns, which are always integers, are first discretized to form a categorical column via the transformation $x \to 2 + \lfloor\ln^2(x)\rfloor$ if $x \geq 2$, while leaving it intact if $x < 2$. Then, the same procedure as for the categorical columns is applied. When a feature from the validation or test sets was not present in the training set, it is replaced with \texttt{MISSING}.

Thus, our second preprocessing strategy is simple standardization, just like in HIGGS, of the numerical columns transformed by $x \to \ln^2(1 + x)$, a similar transformation to the one employed for the winners' discretization strategy strategy. This aligns with the observation in \citet{shtoff2024functionbasis} that the performance on this data-set improves if we treat its numerical columns as numbers, rather than discrete bins.

For the HIGGS dataset, we test a linear model, and spectral neurons of various dimensions that compete against MLPs with 1, 2, and 3 hidden layers and constant width that is chosen such that the number of parameters is as close as possible to the spectral neuron. As for the Criteo dataset, spectral neurons of various dimensions compete against a linear model, and a factorization machine whose embedding vector dimension is chosen such that the factorization machine has a similar number of parameters to the spectral neuron. The objective is convincing that spectral neurons improve with scaling, and are in the ballpark of their natural competition in terms of performance. Of course, we do \emph{not} claim they achieve SOTA performance - their benefits stem from facilitating shape control and retaining coefficient transparency as the models scale. All models are trained with Adam on up to $2^{28}$ samples for Criteo and $2^{26}$ samples of HIGGS, with 4096 samples per mini-batch, and learning rates tuned for each plotted checkpoint.

The Criteo scaling experiment plots are shown in \figref{fig:criteo_scaling_results}. Also, the factorization machine is employed only on the discrete variant, which is how factorization machines are typically employed in practice. We can observe that indeed treating numbers as numbers, i.e. the ``continuous'' variant, performs better than discretizing them. Moreover, for large enough optimizer step count, linear models ``flatten out'' and stop improving while both spectral neurons and factorization machines improve. Additionally, in contrast to a factorization machine, spectral neurons improve nicely with dimension scaling. Finally, both achieve performance in the same ``ballpark''.

\begin{figure}[htbp]
    \centering
    \begin{subfigure}[c]{\textwidth}
        \includegraphics[width=\textwidth]{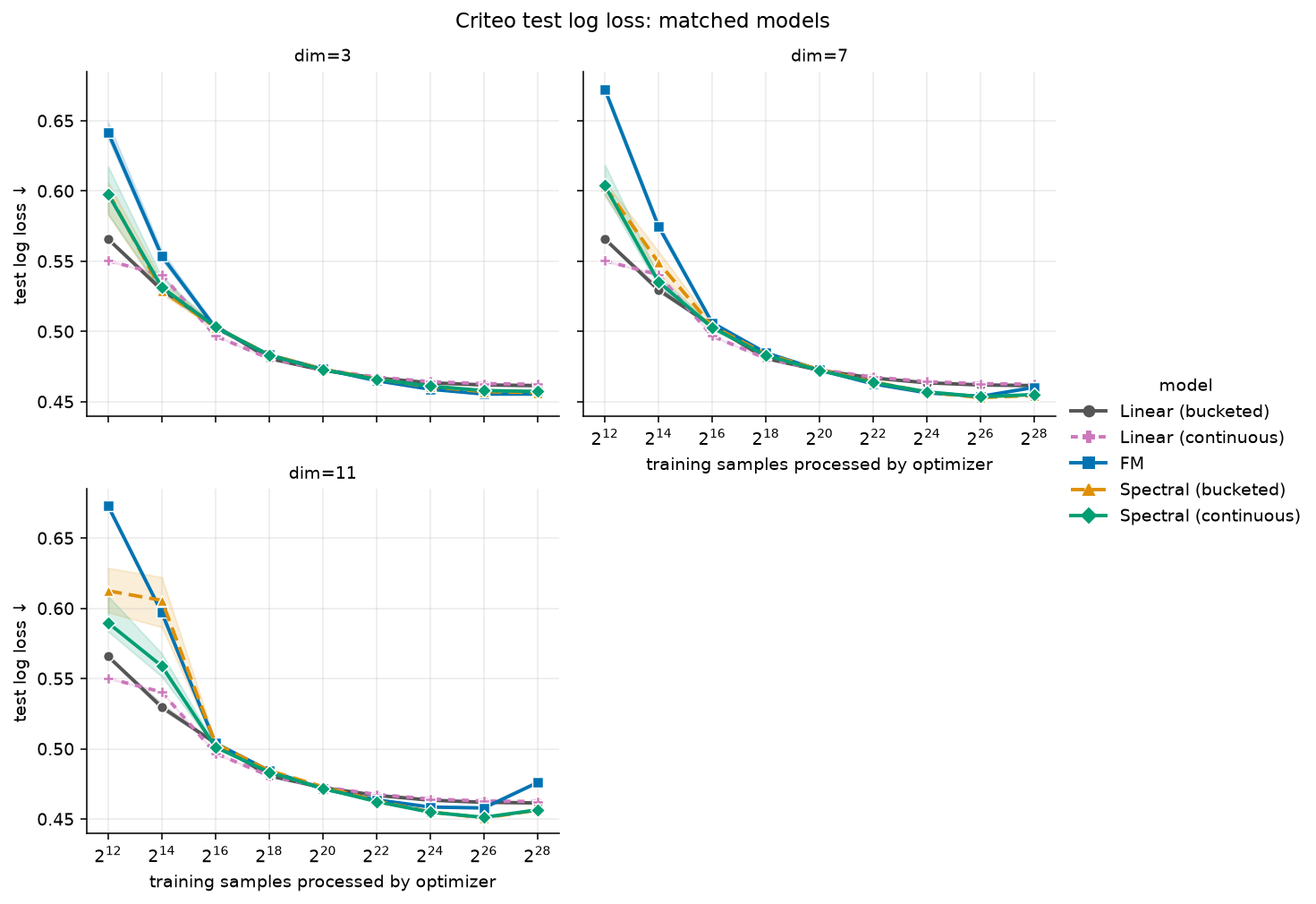}
        \caption{All competing models, grouped by spectral neuron dimension. The embedding dimension of the competing factorization machine (FM) is chosen to match the number of spectral neuron parameters.}
    \end{subfigure} \\
    \vspace{2em}
    \begin{subfigure}[c]{.45\textwidth}
        \includegraphics[width=\textwidth]{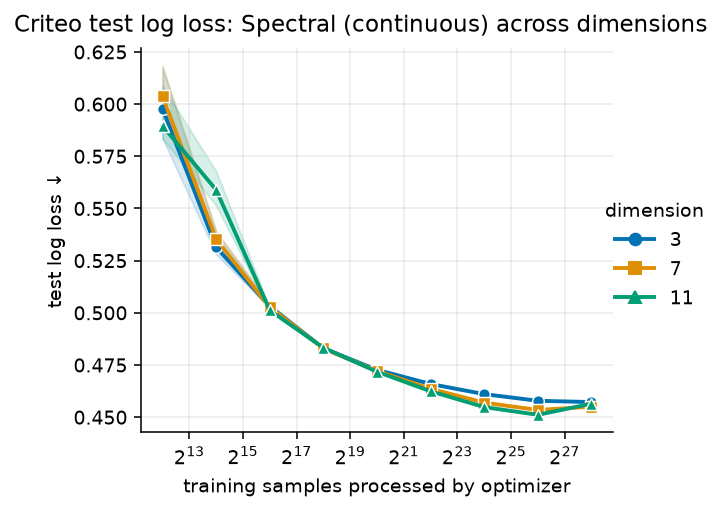}
        \caption{Performance of the spectral neuron for various matrix dimensions with numerical columns treated as numbers.}
    \end{subfigure}
    \hspace{2em}
    \begin{subfigure}[c]{.45\textwidth}
        \includegraphics[width=\textwidth]{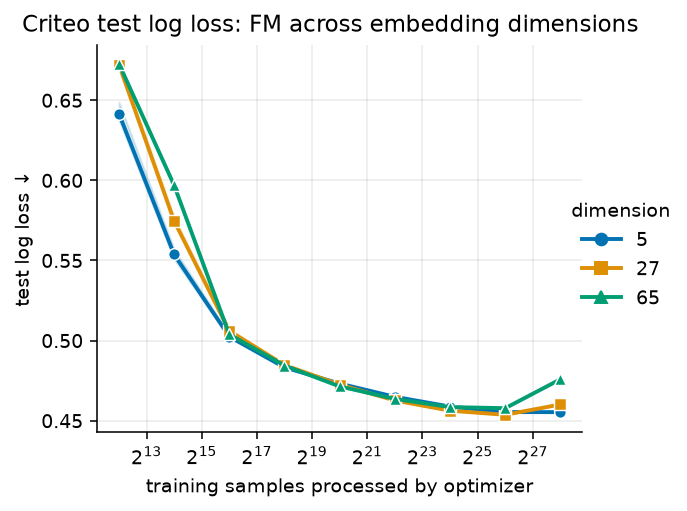}
        \caption{Performance of factorization machines of sizes that match spectral neurons of various matrix dimensions, with numerical columns discretized.}
    \end{subfigure}
    \caption{Results of Criteo scaling experiments.}
    \label{fig:criteo_scaling_results}
\end{figure}

The results for HIGGS scaling experiments are plotted in \figref{fig:higgs_scaling_results}. Similarly, a linear model ``flattens out'' and stops improving after a certain number of samples seen by the optimizer. In contrast, MLPs and spctral neurons do improve. Even though MLPs of depth $\geq 2$ slightly outperform spectral neurons, the performance is in a similar ballpark. Finally, we see that spectral neurons indeed improve with scaling, as claimed.

\begin{figure}[htbp]
    \centering
    \begin{subfigure}[c]{\textwidth}
        \includegraphics[width=\textwidth]{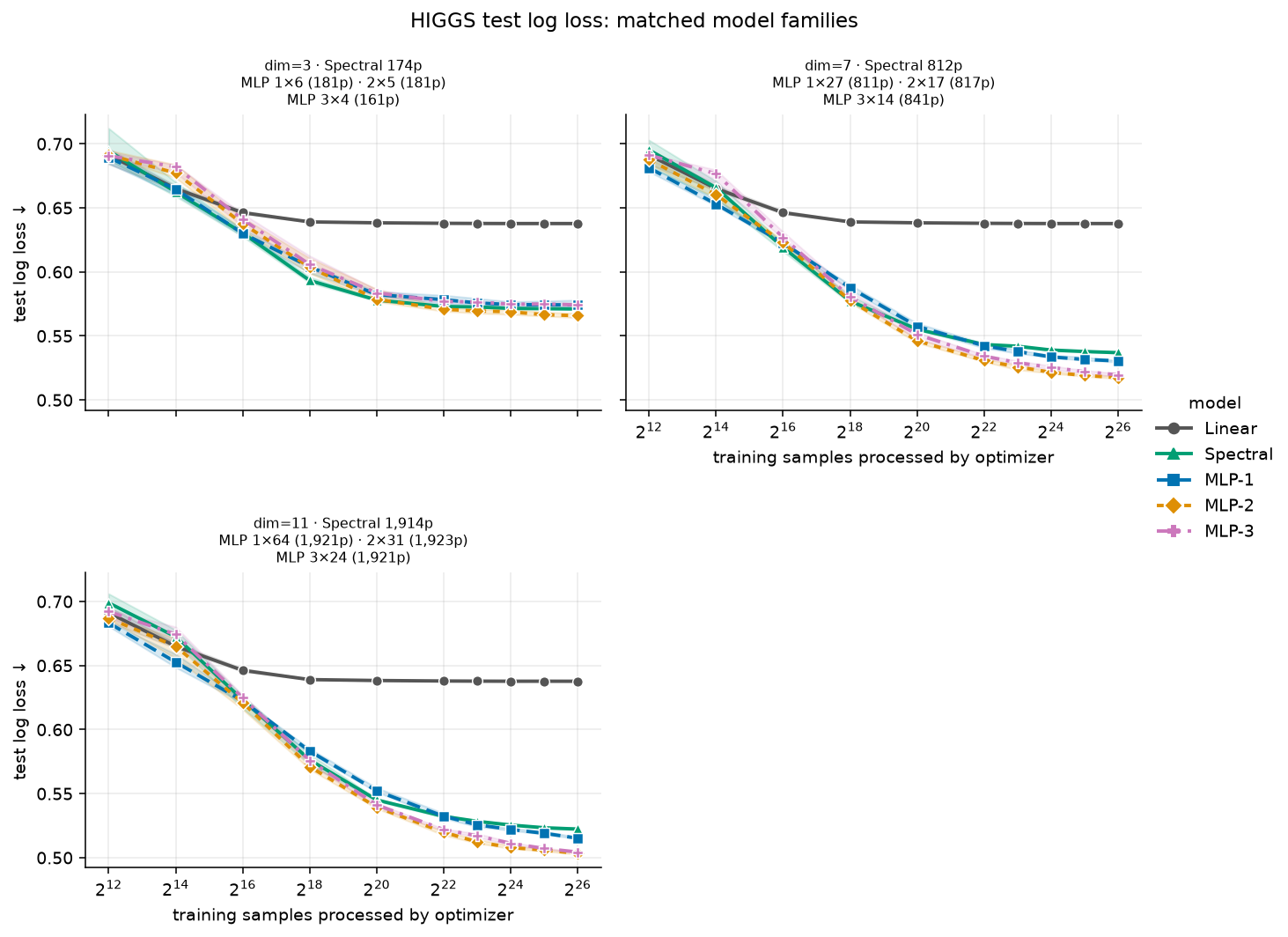}
        \caption{Comparison of models, grouped by spectral neuron dimension. For each MLP depth we show its width and total number of parameters.}
    \end{subfigure}\\
    \vspace{2em}
    \begin{subfigure}[c]{.5\textwidth}
        \includegraphics[width=\textwidth]{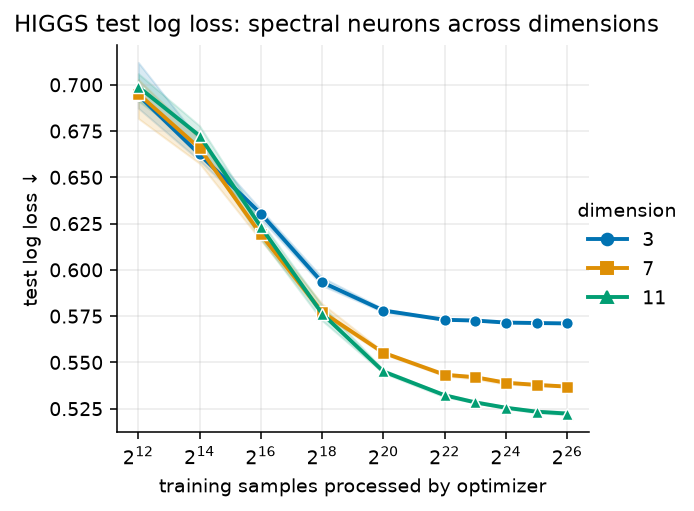}
        \caption{Comparison of spectral neurons of various dimensions.}
    \end{subfigure}
    \caption{Results of HIGGS scaling experiments.}
    \label{fig:higgs_scaling_results}
\end{figure}

\subsection{Global feature-influence bound experiment}
As pointed out in \secref{sec:continuity}, given that a feature $x_i$ changes by $\delta$, the corresponding prediction changes by \emph{at most} $|\delta| \| \mA_i \|_2$. However, this is just an upper bound, and can be vacuous, and thus useless. Testing robustness requires knowing the appropriate model of corruption, so we can estimate a realistic $\delta$, if corruption in one feature is reasonable, and requires domain knowledge. Due to lack of such domain knowledge - we take a simplistic approach and use it for the HIGGS dataset: we select $\epsilon \in [-0.5, 0.5]$ uniformly at random, and corrupt a numerical feature $x_i$ by $\delta = \epsilon \sigma_i$, where $\sigma_i$ is the standard obtained during feature standardization phase described in the pevious section. Then, we measure the actual prediction change and the theoetical upper bound this paper provides:
\[
    \frac{|f(\vx + \delta \ve_i) - f(\vx)|}{|\delta| \| \mA_i \|_2}.
\]
A reasonable corruption model for the Criteo dataset requires more domain knowledge, and since features are anonymized, it is more challenging to devise.

We take trained snapshots of the model with the largest amount of data, meaning the last snapshot, and plot histograms of the above ratio in \figref{fig:higgs_robustness}. At least in this dataset we can see that the bound is not vacuous, even though as matrix dimensons grow, the ratio distribution concentrates more and more around zero. Of course, this is not a general conclusion about any dataset, but this shows that the theoretical bound can be potentially useful on real-world datasets, and thus this aspect of coefficient transparency can be observed in practice. Additionally, we see that the tightness of the upper bound diminishes with dimension, hinting, perhaps, at the need for a spectral norm regularization mechanism that balances model performance with the spectral norms of the matrices $\|\mA_i\|$.

\begin{figure}[htbp]
    \centering
    \includegraphics[height=.9\textheight]{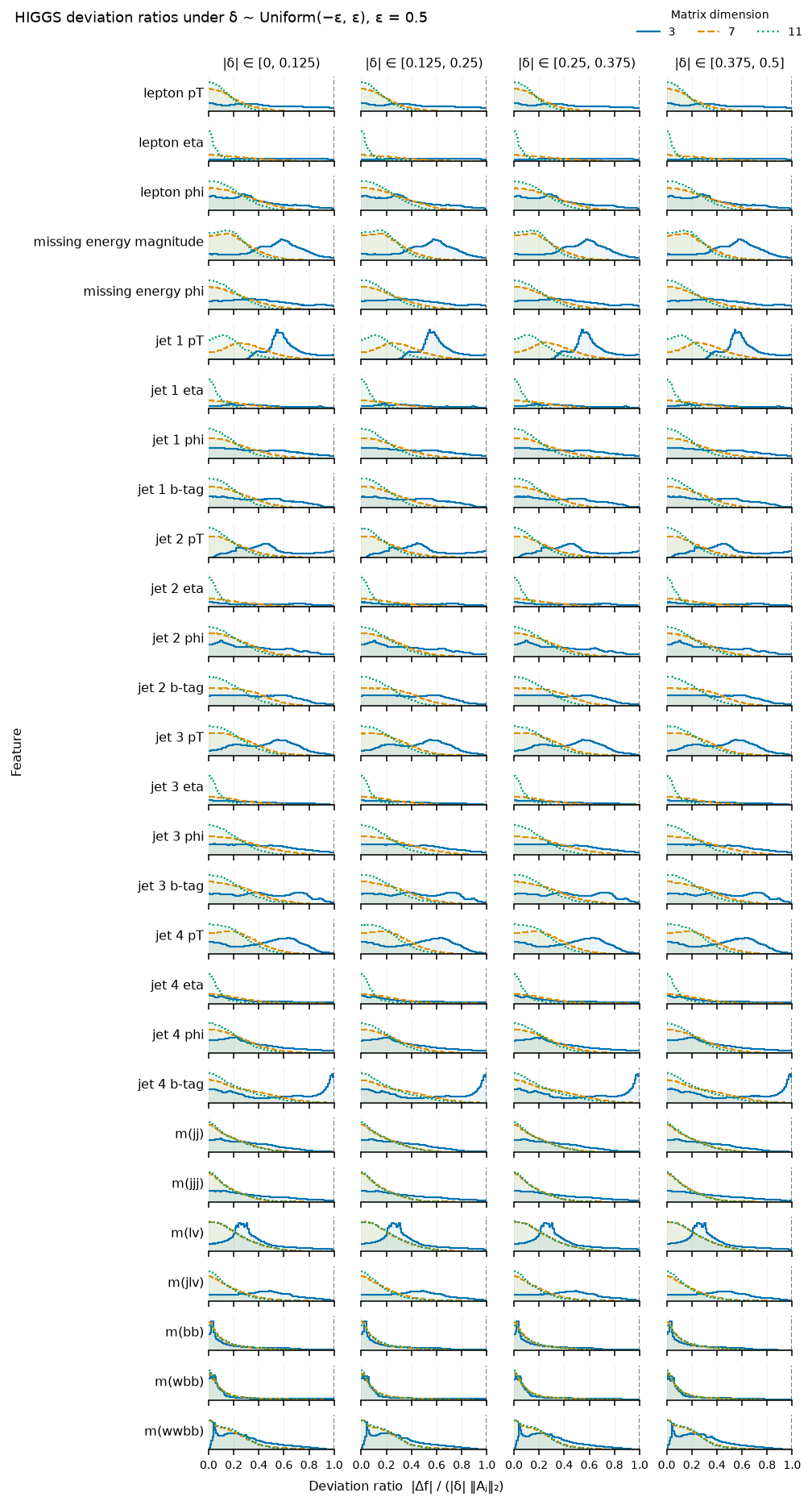}
    \caption{Deviation ratio histograms for all features of the HIGGS dataset across different trained model dimensions.}
    \label{fig:higgs_robustness}
\end{figure}

\section{Discussion and future work}
This paper starts with a quote of a well-known scientist, that perhaps neurons should do more compute than they do now. There are already works on neurons that solve optimization problems, but there is one optimization problem that received decades of research to the point it can be solved extremely reliably - the symmetric eigenvalue problem.

Inspired by the above, in this paper we presented a different notion of an artificial neuron, where the linear combination of features produces a matrix, rather than a scalar, and a non-linearity is an eigenvalue of that matrix. This model retains coefficient transparency analogous to that of linear models, allowing us to reason about feature influence, facilitate shape control by choosing appropriate eigenvalue index and definiteness properties of the learned matrices, while being able to improve with scaling. Such a unique combination is hard to come by.

We have shown just enough to demonstrate that this family may be useful in some machine learning contexts, but there is much more to do. First, computing eigenvalues is an expensive operation - it requires $O(d^3)$ operations for a $d$-dimensional matrix. However, other matrix families may enjoy more efficient computation and parameter count, while losing only a little bit of expressive power. For example, symmetric tri-diagonal matrices that admit efficient eigenvalue algorithms and $O(d)$ parameters \citep{parlett1998symmetric}. 

Multiple learned semi-definite matrices should probably not be chosen to be all diagonal, unless we actively assume they all share the same eigenvector basis. There can be many ways to represent them, such as $L L^T$ for a lower-triangular matrix, or $\sqrt{L L^T}$ where $\sqrt{\cdot}$ denotes the matrix square-root. In this paper we have not tested which parametrization and corresponding initialization strategies perform the best with off-the-shelf optimizers - this is the subject of a future work.

Trainable invertible functions are at the heart of generative flow models. Spline-based normalizing flows construct expressive multivariate bijections from trainable monotone scalar transformations, whose parameters are produced by a conditioner network from the remaining coordinates in a coupling layer, or from preceding coordinates in an autoregressive layer \citep{durkan2019neural,papamakarios2021normalizing}. This suggests replacing the scalar spline function by $g(x) \to \lambda_k(\mA(\vc) + x \mB(\vc))$, where $\vc$ denotes the conditioning coodinates, and $\mB(\vc)$ is positive-definite to ensure the function is invertible. Elementary algebra shows that the inverse itself is available as $g^{-1}(z) = \lambda_{d - k + 1}(\mB^{-1/2}(z \mI - \mA) \mB^{-1/2})$. The study of such a replacement may be another interesting future work. 

Finally, based on our global feature-influence bound experiment, another direction may be studying regularization and feature selection techniques: techniques similar to Lasso to drive weight matrix norms to zero for features with little influence, or techniques to constrain matrix norms to make the model more robust to perturbations.

\section*{AI usage statement}
Except for the related works section, the entirety of this work has been written through the fun activity of conveying knowledge to scientists by writing. In the related works section, AI played a major role in finding related works, and throughout the paper in finding ``canonical'' references to various facts, mainly from linear algebra, widely known by many scientists.

In contrast to the manuscript, the development of the assoicated experimental code has been heavily assisted by OpenAI Codex and reviewer by the authors. Codex usage began after an initial version was manually written, to give Codex a framework of patterns to follow.

\section*{Broad\cite{}er Impact Statement}
We have found advancing science, even if by a small amount, to have only positive impact.

\bibliography{main}
\bibliographystyle{tmlr}

\end{document}